\newcommand{\cpath}[0]{\path[->,>=stealth]}
\newcommand{\G}[0]{\mathcal{G}}
\newcommand{\M}[0]{\mathcal{M}}
\newcommand{\C}[0]{\mathcal{C}}
\newcommand{\independent}{\perp\mkern-9.5mu\perp}
\newcounter{relctr} 
\everydisplay\expandafter{\the\everydisplay\setcounter{relctr}{0}} 
\newcommand\labelrel[2]{%
  \begingroup
    \refstepcounter{relctr}%
    \stackrel{\textnormal{(\alph{relctr})}}{\mathstrut{#1}}%
    \originallabel{#2}%
  \endgroup
}
\newtheorem{theorem}{Theorem}[section]
\newtheorem{definition}[theorem]{Definition}
\title{Causal Imitability Under Context-Specific Independence Relations}
\author{%
  \hspace{-.4cm}Fateme Jamshidi \\
  \hspace{-.4cm}EPFL, Switzerland \\
  \hspace{-.4cm}\texttt{fateme.jamshidi@epfl.ch}\\
  \And
  \hspace{-.3cm}Sina Akbari \\
  \hspace{-.3cm}EPFL, Switzerland \\
  \hspace{-.3cm}\texttt{sina.akbari@epfl.ch} \\
  \And
  \hspace{-.4cm}Negar Kiyavash \\
  \hspace{-.4cm}EPFL, Switzerland \\
  \hspace{-.4cm}\texttt{negar.kiyavash@epfl.ch} \\
}
\begin{document}

\maketitle

\begin{abstract}
Drawbacks of ignoring the causal mechanisms when performing imitation learning have recently been acknowledged.
Several approaches both to assess the feasibility of imitation and to circumvent causal confounding and causal misspecifications have been proposed in the literature.
However, the potential benefits of the incorporation of additional information about the underlying causal structure are left unexplored.
An example of such overlooked information is context-specific independence (CSI), i.e., independence that holds only in certain contexts.
We consider the problem of causal imitation learning when CSI relations are known.
We prove that the decision problem pertaining to the feasibility of imitation in this setting is NP-hard.
Further, we provide a necessary graphical criterion for imitation learning under CSI and show that under a structural assumption, this criterion is also sufficient.
Finally, we propose a sound algorithmic approach for causal imitation learning which takes both CSI relations and data into account.
\end{abstract}

\vspace{-.2cm}
\section{Introduction}
Imitation learning has been shown to significantly improve performance in learning complex tasks in a variety of applications, such as autonomous driving \cite{pan2020imitation}, electronic games \cite{ibarz2018reward, vinyals2019grandmaster}, and navigation \cite{silver2008high}.
Moreover, imitation learning allows for learning merely from observing expert demonstrations, therefore circumventing the need for designing reward functions or interactions with the environment.
Instead, imitation learning works through identifying a policy that mimics the demonstrator's behavior which is assumed to be generated by an expert with near-optimal performance in terms of the reward.
Imitation learning techniques are of two main flavors: \textit{behavioral cloning} (BC) \cite{widrow1964pattern, pomerleau1988alvinn, muller2005off, mulling2013learning}, and \textit{inverse reinforcement learning}\footnote{Also referred to as inverse optimal control in the literature.} (IRL) \cite{ng2000algorithms, abbeel2004apprenticeship, syed2007game, ziebart2008maximum}. 
BC approaches often require extensive data to succeed. IRL methods have proved more successful in practice, albeit at the cost of an extremely high computational load.
The celebrated generative adversarial imitation learning (GAIL) framework and its variants bypass IRL step by occupancy measure matching to learn an optimal policy \cite{ho2016generative}.

Despite recent achievements, still in practice applying imitation learning techniques can result in learning policies that are markedly different from that of the expert \cite{codevilla2019exploring, bansal2018chauffeurnet, kuefler2017imitating}.
This phenomenon is for the most part result of a distributional shift between the demonstrator and imitator environments \cite{daume2009search, ross2010efficient, de2019causal}.
All aforementioned imitation learning approaches rely on the assumption that the imitator has access to observations that match those of the expert.
An assumption clearly violated when unobserved confounding effects are present, e.g., because the imitator has access only to partial observations of the system.
For instance, consider the task of training an imitator to drive a car, the causal diagram of which is depicted in Figure \ref{fig:1a}.
$X$ and $Y$ in this graph represent the action taken by the driver and the latent reward, respectively.
The expert driver controls her speed ($X$) based on the speed limit on the highway (denoted by $S$), along with other covariates such as weather conditions, brake indicator of the car in front, traffic load, etc. We use two such covariates in our example: $Z$ and $T$.
An optimal imitator should mimic the expert by taking actions according to the expert policy $P(X\vert S,Z,T)$.
However, if the collected demonstration data does not include the speed limit ($S$), the imitator would tend to learn a policy that averages the expert speed, taking only the other covariates ($Z$ and $T$) into account.
Such an imitator could end up crashing on serpentine roads or causing traffic jams on highways.
As shown in Figure \ref{fig:1a}, the speed limit acts as a latent confounder\footnote{On the other hand, if the car had automatic cruise control, the expert policy would be independent of the speed limit, resolving imitation issues (refer to Figure \ref{fig:1b}.)}.

One could argue that providing more complete context data to the imitator, e.g. including the speed limit in the driving example, would resolve such issues.
While this solution is not applicable in most scenarios, the fundamental problem in imitation learning is not limited to causal confounding.
As demonstrated by \cite{de2019causal} and \cite{zhang2020causal}, having access to more data not only does not always result in improvement but also can contribute to further deterioration in performance.
In other words, important issues can stem not merely from a lack of observations but from ignoring the underlying causal mechanisms. The drawback of utilizing imitation learning without considering the causal structure has been recently acknowledged, highlighting terms including causal confusion \cite{de2019causal}, sensor-shift \cite{etesami2020causal}, imitation learning with unobserved confounders \cite{zhang2020causal}, and temporally correlated noise \cite{swamy2022causal}.

Incorporating causal structure when designing imitation learning approaches has been studied in recent work. For instance, 
\cite{de2019causal} proposed performing targeted interventions to avoid causal misspecification.
\cite{zhang2020causal} characterized a necessary and sufficient graphical criterion to decide the feasibility of an imitation task (aka \emph{imitability}).
It also proposed a practical causal imitation framework that allowed for imitation in specific instances even when the general graphical criterion (the existence of a $\pi$-backdoor admissible set) did not necessarily hold. 
\cite{swamy2022causal} proposed a method based on instrumental variable regression to circumvent the causal confounding when a valid instrument was available.

\begin{figure}[t] 
    \centering
    \tikzstyle{block} = [draw, fill=white, circle, text centered,inner sep= 0.2cm]
    \tikzstyle{input} = [coordinate]
    \tikzstyle{output} = [coordinate]

	\begin{subfigure}{0.23\textwidth}
	    \centering
	   \begin{tikzpicture}[scale=0.65, every node/.style={scale=0.65}]
            \tikzset{edge/.style = {->,> = latex'}}
    	\node [block, label=center:$X$](X) {};
            \node [block, dashed, right=  1.5cm of X, label=center:$Y$](Y) {};
            \node [block, left=  0.5cm of X, label=center:$T$](T) {};
            \node [block, dashed, above right =  0.7cm and 1cm of X, label=center:$S$](S) {};
            \node [block, above left =  0.5cm and 0.5cm of S, label=center:$Z$](Z) {};
            \cpath (X) edge[style = {->}] (Y);
            \cpath (Z) edge[style = {->}, bend left=50] (Y);
            \cpath (Z) edge[style = {->}] (X);
            \cpath (Z) edge[style = {->}] (T);
            \cpath (T) edge[style = {->}] (X);
            \cpath (S) edge[style = {->}, dashed] (X);
            \cpath (S) edge[style = {->}, dashed] (Y);
            \cpath (Z) edge[style = {->}] (S);

        \end{tikzpicture}
    \caption{not imitable}
    \label{fig:1a}
	\end{subfigure}\hspace{0cm}
        \begin{subfigure}{0.23\textwidth}
	    \centering
	\begin{tikzpicture}[scale=0.65, every node/.style={scale=0.65}]
    	\node [block, label=center:$X$](X) {};
            \node [block, dashed, right=  1.5cm of X, label=center:$Y$](Y) {};
            \node [block, left=  0.5cm of X, label=center:$T$](T) {};
            \node [block, dashed, above right =  0.7cm and 1cm of X, label=center:$S$](S) {};
            \node [block, above left =  0.5cm and 0.5cm of S, label=center:$Z$](Z) {};
            \cpath (X) edge[style = {->}] (Y);
            \cpath (Z) edge[style = {->}, bend left=50] (Y);
            \cpath (Z) edge[style = {->}] (X);
            \cpath (Z) edge[style = {->}] (T);
            \cpath (T) edge[style = {->}] (X);
            \cpath (S) edge[style = {->}, dashed] (Y);
            \cpath (Z) edge[style = {->}] (S);

        \end{tikzpicture}
    \caption{imitable}
    \label{fig:1b}
	\end{subfigure}\hspace{0cm}
    \begin{subfigure}{0.23\textwidth}
    \centering
       \begin{tikzpicture}[scale=0.65, every node/.style={scale=0.65}]
       \node [block, label=center:$X$](X) {};
        \node [block, dashed, right=  1.5cm of X, label=center:$Y$](Y) {};
        \node [block, left=  0.5cm of X, label=center:$T$](T) {};
        \node [block, dashed, above right =  0.7cm and 1cm of X, label=center:$S$](S) {};
        \node [block, above left =  0.5cm and 0.5cm of S, label=center:$Z$](Z) {};
        
        \cpath (X) edge[style = {->}] (Y);
        \cpath (Z) edge[style = {->}, bend left=50] (Y);
        \cpath (Z) edge[style = {->}] (X);
        \cpath (Z) edge[style = {->}] (T);
        \cpath (T) edge[style = {->}] (X);
        \cpath (S) edge[style = {->}, dashed] (Y);
        \cpath (Z) edge[style = {->}] (S);
        \cpath (S) edge[dashed] node[rotate=38, fill=white, anchor=center, pos=0.5] {\color{black}\tiny $T\!=1 \!$} (X);

    \end{tikzpicture}
\caption{contextually imitable}
\label{fig:1c}
\end{subfigure}\hspace{0cm}
    \begin{subfigure}{0.23\textwidth}
        \centering
           \begin{tikzpicture}[scale=0.65, every node/.style={scale=0.65}]
           \node [block, label=center:$E$](X) {};
            \node [block, below left=  1cm and 0.6cm of X, label=center:$W$](Y) {};
            \node [block,  left =   1.2cm of X, label=center:$U$](S) {};
            
            \cpath (S) edge[style = {->}] (Y);
            \cpath (S) edge[style = {->}] (X);
            \cpath (X) edge[] node[rotate=55, fill=white, anchor=center, pos=0.5] {\color{black}\tiny $U\!>0.1 \!$} (Y);
    
        \end{tikzpicture}
    \caption{wage rate example}
    \label{fig:1d}
    \end{subfigure}\hspace{0cm}
    \caption{Example of learning to drive a car.
    (\subref{fig:1a}) Imitation is impossible, as the imitator has no access to the speed limit.
    (\subref{fig:1b}) The car has automatic cruise control, which makes the expert actions independent of the speed limit.
    (\subref{fig:1c}) Driver actions are independent of speed limit only in the context of heavy traffic.
    (\subref{fig:1d}) A simple example of a CSI relation in economics.}
    \label{fig:example}
\end{figure}
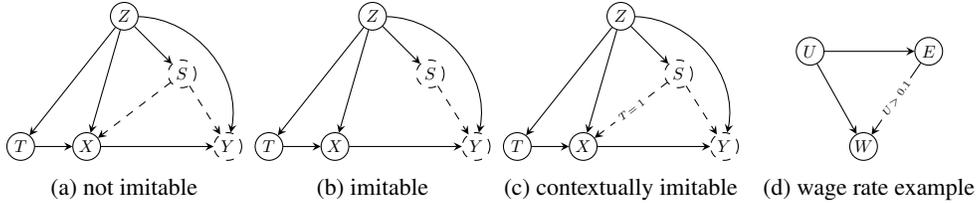
\vspace{0cm}
Despite recent progress, the potential benefits of further information pertaining to the causal mechanisms remain unexplored.
An important type of such information is \emph{context-specific independence} (CSI) relations, which are generalizations of conditional independence.
Take, for instance, the labor market example shown in Figure \ref{fig:1d}, where $W$, $E$, and $U$ represent the wage rate, education level and unemployment rate, respectively.
The wage rate $W$ is a function of $E$ and $U$.
However, when unemployment is greater than $10\%$, the education level does not have a significant effect on the wage rate, as there is more demand for the job than the openings.
This is to say, $W$ is independent of $E$ given $U$, only when $U>10\%$.
This independence is shown by a label $U>0.1$ on the edge $E\to W$.
Analogously, in our driving example, the imitator would still match the expert's policy if there was heavy traffic on the route.
This is because, in the context that there is heavy traffic, the policy of the expert would be independent of the speed limit.
This context-specific independence between the speed limit $S$ and the action $X$ given $T=1$ (heavy traffic) is indicated by a label ($T=1$) on the edge $S\to X$ in Figure \ref{fig:1c}.
This label indicates that the edge $S\to X$ is absent when $T=1$.
The variables $T$ in Figure \ref{fig:1c} and $U$ in Figure \ref{fig:1d} are called \emph{context variables}, i.e., variables which induce conditional independence relations in the system based on their realizations.

CSIs can be incorporated through a refined presentation of Bayesian networks to increase the power of inference algorithms \cite{boutilier1996context}.
The incorporation of CSI also has ramifications on causal inference. For instance,\cite{tikka2019identifying} showed that when accounting for CSIs, Pearl's do-calculus is no longer complete for causal identification. 
They proposed a sound algorithm for identification under CSIs.
It is noteworthy that although not all CSI relations can be learned from mere observational data, several approaches exist for learning certain CSIs from data \cite{koivisto2005computational, hyttinen2018structure, shen2020new}.
In this paper, we investigate how CSIs allow for imitability in previously non-imitable instances.

After presenting the notation (Section \ref{sec:prelim}), we first prove a hardness result, namely, that deciding the feasibility of imitation learning while accounting for CSIs is NP-hard.
This is in contrast to the classic imitability problem  \cite{zhang2020causal}, where deciding imitability is equivalent to a single d-separation test.
We characterize a necessary and sufficient graphical criterion for imitability under CSIs under a structural assumption (Section \ref{sec:imitation}) by leveraging a connection we establish between our problem and classic imitability.
Next, we show that in certain instances, the dataset might allow for imitation, despite the fact that the graphical criterion is not satisfied (Section \ref{sec:data}).
Given the constructive nature of our achievability results, we propose algorithmic approaches for designing optimal imitation policies (Sections \ref{sec:imitation} and \ref{sec:data}) and evaluate their performance in Section \ref{sec:exp}.
The proofs of all our results appear in Appendix \ref{apx:proofs}.

\vspace{-.2cm}
\section{Preliminaries}\label{sec:prelim}
\vspace{-.1cm}
Throughout this work, we denote random variables and their realizations by capital and small letters, respectively. 
Likewise, we use boldface capital and small letters to represent sets of random variables and their realizations, respectively.
For a variable $X$, $\mathcal{D}_X$ denotes the domain of $X$ and $\mathcal{P}_X$ the space of probability distributions over $\mathcal{D}_X$.
Given two subsets of variables $\mathbf{T}$ and $\mathbf{S}$ such that $\mathbf{T}\subseteq\mathbf{S}$, and a realization $\mathbf{s} \in \mathcal{D}_\mathbf{S}$, we use ${(\mathbf{s})}_\mathbf{T}$ to denote the restriction of $\mathbf{s}$ to the variables in $\mathbf{T}$.

We use structural causal models (SCMs) as the semantic framework of our work \cite{pearl2009causality}.
An SCM $M$ is a tuple $\langle \mathbf{U}, \mathbf{V}, P^M(\mathbf{U}), \mathcal{F} \rangle$ where $\mathbf{U}$ and $\mathbf{V}$ are the sets of exogenous and endogenous variables, respectively. 
Values of variables in $\mathbf{U}$ are determined by an exogenous distribution $P^M(\mathbf{U})$, whereas the variables in $\mathbf{V}$ take values defined by the set of functions $\mathbf{F}= \{f^M_V\}_{V \in \mathbf{V}}$.
That is, $V \gets f^M_V(\mathbf{Pa}(V)\cup \mathbf{U}_V)$ where $\mathbf{Pa}(V) \subseteq \mathbf{V}$ and $\mathbf{U}_V \subseteq \mathbf{U}$.
We also partition $\mathbf{V}$ into the observable and latent variables, denoted by $\mathbf{O}$ and $\mathbf{L}$, respectively, such that $\mathbf{V}= \mathbf{O} \cup \mathbf{L}$.
The SCM induces a probability distribution over $\mathbf{V}$ whose marginal distribution over observable variables, denoted by $P^M(\mathbf{O})$, is called the \textit{observational distribution}.
Moreover, we use $do(\mathbf{X}= \mathbf{x})$ to denote an intervention on $\mathbf{X} \subseteq \mathbf{V}$ where the values of $\mathbf{X}$ are set to a constant $\mathbf{x}$ in lieu of the functions $\{f^M_X: \forall X \in \mathbf{X}\}$.
We use $P^M_\mathbf{x}(\mathbf{y}) \coloneqq P^M(\mathbf{Y}= \mathbf{y}|do(\mathbf{X}= \mathbf{x}))$ as a shorthand for the post-interventional distribution of $\mathbf{Y}$ after the intervention $do(\mathbf{X}=\mathbf{x})$. 

Let $\mathbf{X}$, $\mathbf{Y}$, $\mathbf{W}$, and $\mathbf{Z}$ be pairwise disjoint subsets of variables.
$\mathbf{X}$ and $\mathbf{Y}$ are called \emph{contextually independent} given $\mathbf{W}$ in the context $\mathbf{z} \in \mathcal{D}_\mathbf{Z}$ if $P(\mathbf{X}|\mathbf{Y}, \mathbf{W}, \mathbf{z})= P(\mathbf{X}|\mathbf{W}, \mathbf{z})$, whenever $P(\mathbf{Y}, \mathbf{W}, \mathbf{z})>0$.
We denote this context-specific independence (CSI) relation by $\mathbf{X} \independent \mathbf{Y}|\mathbf{W},\mathbf{z}$.
Moreover, a CSI is called \textit{local} if it is of the form $X \independent Y|\mathbf{z}$ where $ \{Y\} \cup \mathbf{Z} \subseteq \mathbf{Pa}(X)$.

A directed acyclic graph (DAG) is defined as a pair $\mathcal{G}= (\mathbf{V}, \mathbf{E})$, where $\mathbf{V}$ is the set of vertices, and $\mathbf{E}\subseteq \mathbf{V} \times \mathbf{V}$ denotes the set of directed edges among the vertices.
SCMs are associated with DAGs, where each variable is represented by a vertex of the DAG, and there is a directed edge from $V_i$ to $V_j$ if $V_i\in\mathbf{Pa}(V_j)$ \cite{pearl2009causality}.
Whenever a local CSI of the form $V_i\independent V_j\vert\mathbf{\ell}$ holds, we say $\mathbf{\ell}$ is a label for the edge $(V_i,V_j)$ and denote it by $\mathbf{\ell}\in\mathcal{L}_{(V_i,V_j)}$.
Recalling the example of Figure \ref{fig:1c}, the realization $T=1$ is a label for the edge $(S,X)$, which indicates that this edge is absent when $T$ is equal to $1$.
Analogous to \cite{pensar2015labeled}, we define a labeled DAG (LDAG), denoted by $\G^\mathcal{L}$, as a tuple $\G^\mathcal{L}= (\mathbf{V}, \mathbf{E}, \mathcal{L})$,
where $\mathcal{L}$ denotes the labels representing local independence relations.
More precisely,
$\mathcal{L}= \{\mathcal{L}_{(V_i,V_j)}:\mathcal{L}_{(V_i,V_j)} \neq \emptyset \; | \;  (V_i,V_j) \in \mathbf{E}\}$, where
$$\mathcal{L}_{(V_i,V_j)}= \{\mathbf{\ell} \in \mathcal{D}_{\mathbf{V}'}| \mathbf{V}' \subseteq \mathbf{Pa}(V_j)\setminus \{V_i\}, V_i \independent V_j|\mathbf{\ell}\}.$$
Note that, when $\mathcal{L} = \emptyset$, $\G^\mathcal{L}$ reduces to a DAG.
That is, every DAG is a special LDAG with no labels.
For ease of notation, we drop the superscript $\mathcal{L}$ when $\mathcal{L} =\emptyset$.
Given a label set $\mathcal{L}$, we define the context variables of $\mathcal{L}$ as the subset of variables that at least one realization of them appears in the edge labels.
More precisely,
\begin{equation}\label{eq: CL}
        \mathbf{C}(\mathcal{L}) \! = \! \Big \{ \! V_i| \exists V_j, V_k \neq V_i, \mathbf{V}', \mathbf{\ell}:  (V_k, V_j) \!  \in \! \mathbf{E},
         V_i \in \mathbf{V}',  \mathbf{\ell} \in \mathcal{D}_{\mathbf{V}'} \cap \mathcal{L}_{(V_k,V_j)} \Big \}.
\end{equation}
We mainly focus on the settings where context variables are discrete or categorical.
We let $\mathcal{M}_{\langle\G^\mathcal{L}\rangle}$ denote the class of SCMs compatible with the causal graph $\G^\mathcal{L}$.
For a DAG $\G$,
we use $\mathcal{G}_{\overline{\mathbf{X}}}$ and $\mathcal{G}_{\underline{\mathbf{X}}}$ to represent the subgraphs of $\G$ obtained by removing edges incoming to and outgoing from vertices of $\mathbf{X}$, respectively.
We also use standard kin abbreviations to represent graphical relationships: the sets of parents, children, ancestors, and descendants of $\mathbf{X}$ in $\G$ are denoted by $\mathbf{Pa}(\mathbf{X})$ $\mathbf{Ch}(\mathbf{X})$, $\mathbf{An}(\mathbf{X})$, and $\mathbf{De}(\mathbf{X})$, respectively.
For disjoint subsets of variable $\mathbf{X}$, $\mathbf{Y}$ and $\mathbf{Z}$ in $\G$,
$\mathbf{X}$ and $\mathbf{Y}$ are said to be d-separated by $\mathbf{Z}$ in $\G$, denoted by $\mathbf{X} \perp \mathbf{Y} |\mathbf{Z}$, if every path between vertices in $\mathbf{X}$ and $\mathbf{Y}$ is blocked by $\mathbf{Z}$ (See Definition 1.2.3. in \cite{pearl2009causality}).
Finally, solid and dashed vertices in the figures represent the observable and latent variables, respectively.

\vspace{-.2cm}
\section{Imitability}\label{sec:imitation}
\vspace{-.1cm}
In this section, we address the decision problem, i.e., whether imitation learning is feasible, given a causal mechanism.
We first review the imitation learning problem from a causal perspective, analogous to the framework developed by \cite{zhang2020causal}.
We will use this framework to formalize the causal imitability problem in the presence of CSIs.
Recall that $\mathbf{O}$ and $\mathbf{L}$ represented the observed and unobserved variables, respectively.
We denote the action and reward variables by $X \in \mathbf{O}$ and $Y \in \mathbf{L}$, respectively.
The reward variable is commonly assumed to be unobserved in imitation learning.
Given a set of observable variables $\mathbf{Pa}^\Pi\subseteq \mathbf{O}\setminus\mathbf{De}(X)$,
a policy $\pi$ is then defined as a stochastic mapping, denoted by $\pi(X|\mathbf{Pa}^\Pi)$, mapping the values of $\mathbf{Pa}^\Pi$ to a probability distribution over the action $X$. 
Given a policy $\pi$, we use $do(\pi)$ to denote the intervention following the policy $\pi$, i.e., replacing the original function $f_X$ in the SCM by the stochastic mapping $\pi$.
Such a policy is also referred to as soft or stochastic intervention in the literature \cite{eberhardt2007interventions}.
The distribution of variables under policy $do(\pi)$ can be expressed in terms of post-interventional distributions ($P(\cdot\vert do(x))$) as follows:
\begin{equation}\label{eq:softpi}
P(\mathbf{v}|do(\pi))=\sum_{\substack{x\in\mathcal{D}_X, \mathbf{pa}^\Pi \in\mathcal{D}_{\mathbf{Pa}^\Pi}}}\!\!\!P(\mathbf{v}\vert do(x), \mathbf{pa}^\Pi)\pi(x\vert\mathbf{pa}^\Pi)P(\mathbf{pa}^\Pi),
\end{equation}
where $\mathbf{v}$ is a realization of an arbitrary subset $\mathbf{V}'\subseteq\mathbf{V}$.
We refer to the collection of all possible policies as the \textit{policy space}, denoted by $\Pi= \{\pi: \mathcal{D}_{\mathbf{Pa}^\Pi} \to \mathcal{P}_{X}\}$.
Imitation learning is concerned with learning an optimal policy $\pi^*\in\Pi$ such that the reward distribution under policy $\pi^*$ matches that of the expert policy, that is, $P(y\vert do(\pi^*))=P(y)$ \cite{zhang2020causal}.
Given a DAG $\G$ and the policy space $\Pi$, if such a policy exists, the instance is said to be \emph{imitable} w.r.t. $\langle\G^\mathcal{L},\Pi\rangle$.
The causal imitability problem is formally defined below.
\begin{definition}[Classic imitability w.r.t. $ \langle\G, \Pi \rangle$ \cite{zhang2020causal}]
    Given a latent DAG $\G$ and a policy space $\Pi$, let $Y$ be an arbitrary variable in $\G$. $P(y)$ is said to be imitable w.r.t. $ \langle\G, \Pi \rangle$ if for any  $M \in \M_{\langle \G \rangle}$, there exists a policy $\pi \in \Pi$ uniquely computable from $P(\mathbf{O})$ such that $P^M(y|do(\pi))= P^M(y)$.
\end{definition}

Note that if $Y\notin\mathbf{De}(X)$, the third rule of Pearl's do calculus implies $P(y\vert do(x),\mathbf{pa}^\Pi)=P(y\vert\mathbf{pa}^\Pi)$, and from Equation~\eqref{eq:softpi}, $P(y\vert do(\pi))=P(y)$ for any arbitrary policy $\pi$.
Intuitively, in such a case, action $X$ has no effect on the reward $Y$, and regardless of the chosen policy, imitation is guaranteed.
Therefore, throughout this work, we assume that $X$ affects $Y$, i.e., $Y \in \mathbf{De}(X) \cap \mathbf{L}$.
Under this assumption, \cite{zhang2020causal} proved that $P(y)$ is imitable w.r.t. $\langle \G, \Pi \rangle$ if and only if there exists a $\pi$-backdoor admissible set $\mathbf{Z}$ w.r.t. $ \langle\G, \Pi \rangle$.
\begin{definition}[$\pi$-backdoor, \cite{zhang2020causal}]
    Given a DAG $\G$ and a policy space $\Pi$, a set $\mathbf{Z}$ is called $\pi$-backdoor admissible set w.r.t. $\langle \G, \Pi \rangle$ if and only if $\mathbf{Z} \subseteq \mathbf{Pa}^\Pi $ and $Y \perp X |\mathbf{Z} $ in $\G_{\underline{X}}$.
\end{definition}
The following lemma further reduces the search space of $\pi$-backdoor admissible sets to a single set.
\begin{restatable}{lemma}{lemmaZZ}\label{lem:backdoor}
    Given a latent DAG $\G$ and a policy space $\Pi$, if there exists a $\pi$-backdoor admissible set w.r.t. $\langle\G,\Pi\rangle$, then
    $\mathbf{Z}= \mathbf{An}(\{X, Y\}) \cap (\mathbf{Pa}^\Pi)$ is a $\pi$-backdoor admissible set w.r.t. $ \langle\G, \Pi \rangle$.
\end{restatable}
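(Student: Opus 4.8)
The plan is to reduce the claim to the monotonicity of vertex cuts in an undirected graph, via the moralization characterization of d-separation. Write $\G'\coloneqq\G_{\underline{X}}$; let $\mathbf{W}$ be a $\pi$-backdoor admissible set (one exists by hypothesis), so $\mathbf{W}\subseteq\mathbf{Pa}^\Pi$ and $Y\perp X\mid\mathbf{W}$ in $\G'$; and set $\mathbf{Z}\coloneqq\mathbf{An}(\{X,Y\})\cap\mathbf{Pa}^\Pi$. Since $\mathbf{Z}\subseteq\mathbf{Pa}^\Pi$ by construction, it suffices to prove $Y\perp X\mid\mathbf{Z}$ in $\G'$.

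First I would record two easy remarks. (a) Deleting the edges out of $X$ does not alter $\mathbf{An}(\{X,Y\})$: a directed path into $X$ cannot use an out-edge of $X$ without creating a cycle, and a directed path into $Y$ that uses an out-edge of $X$ must pass through $X$; hence $\mathbf{An}_{\G}(\{X,Y\})=\mathbf{An}_{\G'}(\{X,Y\})$, and I call this common set $\mathbf{A}$. (b) $X,Y\notin\mathbf{Pa}^\Pi$, hence $X,Y\notin\mathbf{W}\cup\mathbf{Z}$, because $\mathbf{Pa}^\Pi\subseteq\mathbf{O}\setminus\mathbf{De}(X)$ while $X\in\mathbf{De}(X)$ and $Y\in\mathbf{De}(X)$ by the standing assumption $Y\in\mathbf{De}(X)\cap\mathbf{L}$. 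Consequently $\mathbf{W}_1\coloneqq\mathbf{W}\cap\mathbf{A}\subseteq\mathbf{Pa}^\Pi\cap\mathbf{A}=\mathbf{Z}$, so $\mathbf{W}_1\subseteq\mathbf{Z}\subseteq\mathbf{A}\setminus\{X,Y\}$.

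The core step would invoke the moralization characterization of d-separation: for disjoint subsets, $Y\perp X\mid\mathbf{S}$ in $\G'$ iff $\mathbf{S}$ separates $X$ from $Y$ in the moral graph of the subgraph of $\G'$ induced on $\mathbf{An}_{\G'}(\{X,Y\}\cup\mathbf{S})$. Whenever $\mathbf{S}\subseteq\mathbf{A}$ this induced subgraph is the one on $\mathbf{A}$; let $H$ be the moral graph of $\G'$ restricted to $\mathbf{A}$ — a single undirected graph, independent of $\mathbf{S}$ as long as $\mathbf{S}\subseteq\mathbf{A}$. The argument then runs: (i) $\mathbf{W}_1$ separates $X$ from $Y$ in $H$. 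Here $H$ is a subgraph (no more vertices, no more edges) of the moral graph $G_{\mathrm{big}}$ of $\G'$ restricted to $\mathbf{An}_{\G'}(\{X,Y\}\cup\mathbf{W})$, since shrinking the ancestral set can only delete vertices and co-parent (\emph{marrying}) edges; any $X$–$Y$ path in $H$ avoiding $\mathbf{W}_1$ uses only vertices of $\mathbf{A}$, hence avoids all of $\mathbf{W}$, and remains an $X$–$Y$ path in $G_{\mathrm{big}}$ avoiding $\mathbf{W}$, contradicting that $\mathbf{W}$ (being $\pi$-backdoor admissible) separates $X$ from $Y$ in $G_{\mathrm{big}}$. (ii) Since $\mathbf{W}_1\subseteq\mathbf{Z}\subseteq\mathbf{A}\setminus\{X,Y\}$, enlarging an $X$–$Y$ separator of the undirected graph $H$ keeps it one, so $\mathbf{Z}$ also separates $X$ from $Y$ in $H$. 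Reading the characterization backwards (with $\mathbf{An}_{\G'}(\{X,Y\}\cup\mathbf{Z})=\mathbf{A}$) gives $Y\perp X\mid\mathbf{Z}$ in $\G'$, as desired.

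I expect step (i) to be the only delicate point: it is where the conditioning set is shrunk from $\mathbf{W}$ to $\mathbf{W}\cap\mathbf{A}$, which in general could open colliders, and the moral-graph comparison is exactly what makes this safe — one must check that passing to the smaller ancestral set never introduces new edges. The two preliminary remarks and the "enlarge a cut" step (ii) are routine; step (ii) is also precisely why the moralization route is cleaner than a direct d-separation argument, since there the enlargement from $\mathbf{W}_1$ to $\mathbf{Z}$ would otherwise require separately ruling out collider activation.
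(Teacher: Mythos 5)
Your proof is correct, and its overall skeleton matches the paper's: both first observe that $\mathbf{An}(\{X,Y\})$ is unchanged when passing from $\G$ to $\G_{\underline{X}}$ (your remark (a) is verbatim the paper's first claim), and both then argue that the canonical ancestral set $\mathbf{Z}=\mathbf{An}(\{X,Y\})\cap\mathbf{Pa}^\Pi$ inherits the separating property from any existing admissible set. The difference is in how that second step is discharged. The paper simply cites Lemma~3.4 of van der Zander et al.\ (with $I=\emptyset$, $R=\mathbf{Pa}^\Pi$) and stops; you instead re-derive that lemma in the special case needed, via the moralization characterization of d-separation: shrink $\mathbf{W}$ to $\mathbf{W}\cap\mathbf{A}$ by noting that the moral graph of the smaller ancestral subgraph is a subgraph of the larger one (so no new marrying edges appear and the cut property is preserved), then enlarge to $\mathbf{Z}$ using monotonicity of vertex cuts in an undirected graph. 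Your version buys self-containedness and makes explicit exactly where the potential collider-activation issue is neutralized, at the cost of invoking the Lauritzen moralization criterion; the paper's version is shorter but opaque without the external reference. Both your delicate step (i) and the routine step (ii) check out: the key facts you rely on --- that an ancestral set is closed under parents (so marrying edges within $\mathbf{A}$ agree in $H$ and $G_{\mathrm{big}}$, making $H$ a subgraph of $G_{\mathrm{big}}$), that $\mathbf{An}_{\G_{\underline{X}}}(\{X,Y\}\cup\mathbf{Z})=\mathbf{A}$ since $\mathbf{Z}\subseteq\mathbf{A}$, and that $X,Y\notin\mathbf{Pa}^\Pi$ so the conditioning sets are disjoint from $\{X,Y\}$ --- are all established or immediate from the paper's standing assumptions.
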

As a result, deciding the imitability reduces to testing a d-separation, i.e., whether $\mathbf{Z}$ defined in Lemma \ref{lem:backdoor} d-separates $X$ and $Y$ in $\G_{\underline{X}}$, for which efficient algorithms exist \cite{geiger1990d, tian1998finding}.
If this d-separation holds, then $\pi(X\vert\mathbf{Z})=P(X\vert\mathbf{Z})$ is an optimal imitating policy.
Otherwise, the instance is not imitable.
It is noteworthy that in practice, it is desirable to choose  $\pi$-admissible sets with small cardinality for statistical efficiency. Polynomial-time algorithms for finding minimum(-cost) d-separators exist   \cite{acid1996an, tian1998finding}.

\subsection{Imitability with CSIs}
Deciding the imitability when accounting for CSIs is not as straightforward as the classic case discussed earlier.
In particular, as we shall see, the existence of $\pi$-backdoor admissible sets is not necessary to determine the imitability of $P(y)$ anymore in the presence of CSIs.
In this section, we establish a connection between the classic imitability problem and imitability under CSIs.
We begin with a formal definition of imitability in our setting.
\begin{definition}[Imitability w.r.t. $ \langle \G^\mathcal{L}, \Pi  \rangle  $]
    Given an LDAG $\G^\mathcal{L}$ and a policy space $\Pi$, let $Y$ be an arbitrary variable in $\G^\mathcal{L}$.
    $P(y)$ is called imitable w.r.t. $ \langle \G^\mathcal{L}, \Pi \rangle$ if for any $M \in \M_{\langle \G^\mathcal{L} \rangle}$, there exists a policy $\pi \in \Pi$ uniquely computable from $P(\mathbf{O})$ such that $P^M(y|do(\pi))=P^M(y)$.
\end{definition}
For an LDAG $\G^\mathcal{L}$, recall that we defined the set of context variables $\mathbf{C}(\mathcal{L})$ by Equation~\eqref{eq: CL}.
The following definition is central in linking the imitability under CSIs to the classic case.
\begin{definition}[Context-induced subgraph]\label{def: cgraph}
    Given an LDAG $\G^\mathcal{L}=(\mathbf{V}, \mathbf{E}, \mathcal{L})$, for a subset $\mathbf{W}\subseteq \mathbf{C}(\mathcal{L})$ and its realization $\mathbf{w} \in \mathcal{D_ {\mathbf{W}}} $, we define the context-induced subgraph of $\G^\mathcal{L}$ w.r.t. $\mathbf{w}$, as the LDAG obtained from $\G^\mathcal{L}$ by keeping only the labels that are compatible with $\mathbf{w}$, and deleting the edges that are absent given $\mathbf{W}=\mathbf{w}$, along with the edges incident to $\mathbf{W}$.
\end{definition}

\begin{figure}[t] 
    \centering
    \tikzstyle{block} = [draw, fill=white, circle, text centered,inner sep= 0.2cm]
    \tikzstyle{input} = [coordinate]
    \tikzstyle{output} = [coordinate]

	\begin{subfigure}{0.25\textwidth}
	    \centering
	\begin{tikzpicture}[scale=0.65, every node/.style={scale=0.65}]
            \tikzset{edge/.style = {->,> = latex'}}
    	\node [block, label=center:$X$](X) {};
            \node [block, above=  0.5cm of X, label=center:$Z$](Z) {};
            \node [block, right=  0.8cm of X, label=center:$S$](S) {}; 
            \node [block, dashed, right=  0.7cm of S, label=center:$Y$](Y) {};
            \node [block, above=  0.5cm of Y, label=center:$T$](T) {};
            \node [block, dashed, above = 0.5cm of S, label=center:$U$](U) {};

            \cpath (Z) edge[style = {->}] (X);
            \cpath (X) edge[style = {->}] (S);
            \cpath (U) edge[dashed] node[rotate=34, fill=white, anchor=center, pos=0.5] {\color{black} \scriptsize $Z\!=0 \!$} (X);
            \cpath (U) edge[style = {->}, dashed] (S);
            \cpath (S) edge[style = {->}] (Y);
            \cpath (T) edge[style = {->}] (Y);
            \cpath (X) edge[bend right= 50] node[rotate=0, fill=white, anchor=center, pos=0.5] {\color{black} \small $T\!=0 \!$} (Y);

        \end{tikzpicture}
    \caption{$\G^\mathcal{L}$\vspace{.5em}}
    \label{fig: exa}
	\end{subfigure}\hspace{-.15cm}
    \begin{subfigure}{0.32\textwidth}
	    \centering
	   \begin{tikzpicture}[scale=0.65, every node/.style={scale=0.65}]
    	\node [block, label=center:$X$](X) {};
            \node [block, above=  0.5cm of X, label=center:$Z$](Z) {};
            \node [block, right=  0.8cm of X, label=center:$S$](S) {}; 
            \node [block, dashed, right=  0.7cm of S, label=center:$Y$](Y) {};
            \node [block, above=  0.5cm of Y, label=center:$T$](T) {};
            \node [block, dashed, above = 0.5cm of S, label=center:$U$](U) {};

            \cpath (T) edge[style = {->}] (Y);
            \cpath (X) edge[style = {->}] (S);
            \cpath (U) edge[style = {->}, dashed] (S);
            \cpath (U) edge[style = {->}, dashed] (X);
            \cpath (S) edge[style = {->}] (Y);
            \cpath (X) edge[bend right= 50] node[rotate=0, fill=white, anchor=center, pos=0.5] {\color{black} \small $T\!=0 \!$} (Y);
        \end{tikzpicture}
    \caption{$\!\G_\mathbf{w}^{\mathcal{L}_\mathbf{w}}\!$, $\!\mathbf{W} \! =  \!\{Z\}$,\! $\mathbf{w} \!=\!\{ 1 \}$}
    \label{fig: exb}
	\end{subfigure}
    \begin{subfigure}{0.36\textwidth}
	    \centering
	   \begin{tikzpicture}[scale=0.65, every node/.style={scale=0.65}]
    	\node [block, label=center:$X$](X) {};
            \node [block, above=  0.5cm of X, label=center:$Z$](Z) {};
            \node [block, right=  0.8cm of X, label=center:$S$](S) {}; 
            \node [block, dashed, right=  0.7cm of S, label=center:$Y$](Y) {};
            \node [block, above=  0.5cm of Y, label=center:$T$](T) {};
            \node [block, dashed, above = 0.5cm of S, label=center:$U$](U) {};

            \cpath (X) edge[style = {->}] (S);
            \cpath (U) edge[style = {->}, dashed] (S);
            \cpath (U) edge[style = {->}, dashed] (X);
            \cpath (S) edge[style = {->}] (Y);
            \cpath (X) edge[white, bend right= 50] (Y);
        \end{tikzpicture}
    \caption{$\G_\mathbf{w}^{\mathcal{L}_\mathbf{w}}$,  $\mathbf{W} = \{T, Z\}$, $\mathbf{w}=\{0,1\}$}
    \label{fig: exc}
	\end{subfigure}\hspace{0cm}
    \caption{Two examples of context-induced subgraphs (Definition \ref{def: cgraph}).\vspace{-.2cm}}
    \label{fig: examplegraph}
\end{figure}
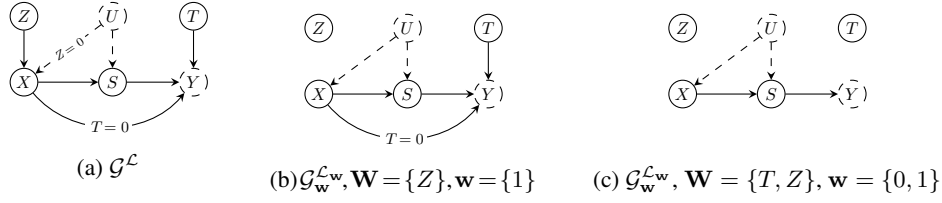

Consider the example of Figure \ref{fig: examplegraph} for visualization.
In the context $Z=1$, the label $Z=0$ on the edge $U\to X$ is discarded, as $Z=0$ is not compatible with the context (see Figure \ref{fig: exb}.)
Note that edges incident to the context variable $Z$ are also omitted.
On the other hand, in the context $Z=1,T=0$, the edge $X\to Y$ is absent and can be deleted from the corresponding graph (refer to Figure \ref{fig: exc}.)
Edges incident to both $T$ and $Z$ are removed in this case.
Equipped with this definition,
the following result, the proof of which appears in Appendix \ref{apx:proofs}, characterizes a necessary condition for imitability under CSIs.

\begin{restatable}{lemma}{lemmaonly}\label{lem:nec}
    Given an LDAG $\G^ \mathcal{L}$ and a policy space $\Pi$, let $Y$ be an arbitrary variable in $\G^ \mathcal{L}$.
    $P(y)$ is imitable w.r.t. $\langle \G^\mathcal{L}, \Pi \rangle$ only if $P(y)$ is imitable w.r.t. $\langle \G_{\mathbf{w}}^{\mathcal{L}_{\mathbf{w}}}, \Pi \rangle$ for every realization $\mathbf{w} \in \mathcal{D}_{\mathbf{w}}$ of every subset of variables $\mathbf{W} \subseteq \mathbf{C}(\mathcal{L})$.
\end{restatable}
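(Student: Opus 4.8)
The plan is to prove the implication directly (rather than by contraposition): assuming $P(y)$ is imitable w.r.t. $\langle\G^\mathcal{L},\Pi\rangle$, I will produce, for every $\mathbf{W}\subseteq\mathbf{C}(\mathcal{L})$ and every $\mathbf{w}\in\mathcal{D}_\mathbf{W}$, a rule witnessing imitability of $\langle\G_\mathbf{w}^{\mathcal{L}_\mathbf{w}},\Pi\rangle$. First I would dispose of the degenerate cases: if $Y\notin\mathbf{De}(X)$ in $\G_\mathbf{w}^{\mathcal{L}_\mathbf{w}}$ -- which happens in particular whenever $X\in\mathbf{W}$ or $Y\in\mathbf{W}$, since every edge incident to $\mathbf{W}$ is removed in the context-induced subgraph -- then, as observed in Section~\ref{sec:imitation}, $P(y\mid do(\pi))=P(y)$ for \emph{every} policy $\pi$, so the instance is imitable. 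Hence I may assume $X,Y\notin\mathbf{W}$ and $Y\in\mathbf{De}(X)\cap\mathbf{L}$ in $\G_\mathbf{w}^{\mathcal{L}_\mathbf{w}}$.

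The crux is an \emph{embedding}: every SCM compatible with the context-induced subgraph can be realized, ``in the context $\mathbf{W}=\mathbf{w}$'', inside an SCM compatible with $\G^\mathcal{L}$. Given $M'\in\M_{\langle\G_\mathbf{w}^{\mathcal{L}_\mathbf{w}}\rangle}$, I would build $M$ by (i) assigning each context variable $W\in\mathbf{W}$ the constant mechanism $f^M_W\equiv(\mathbf{w})_W$, so $\mathbf{W}\equiv\mathbf{w}$ in $M$; and (ii) for each $V\notin\mathbf{W}$, letting $f^M_V$ read the $\G^\mathcal{L}$-parents of $V$ piecewise -- when the context parents of $V$ are all at their $\mathbf{w}$-values, $f^M_V$ returns $f^{M'}_V$ evaluated on the parents of $V$ that survive in $\G_\mathbf{w}^{\mathcal{L}_\mathbf{w}}$ (none of which are context variables), and on every other (never-visited) configuration of the context parents $f^M_V$ returns a fixed default value. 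The exogenous variables and their distribution are inherited from $M'$, discarding those feeding only the now-isolated context vertices. I would then verify $M\in\M_{\langle\G^\mathcal{L}\rangle}$: the parent sets are respected because $\mathbf{Pa}_{\G_\mathbf{w}^{\mathcal{L}_\mathbf{w}}}(V)\subseteq\mathbf{Pa}_{\G^\mathcal{L}}(V)$; and each local CSI label of $\G^\mathcal{L}$ is respected because on the ``off-$\mathbf{w}$'' pieces $f^M_V$ is constant in all of its parents, while on the ``$\mathbf{w}$'' piece a label whose context is incompatible with $\mathbf{w}$ is never activated and a label compatible with $\mathbf{w}$ reduces -- by Definition~\ref{def: cgraph} -- to a label retained in $\mathcal{L}_\mathbf{w}$, which $M'$ already enforces. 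Finally, since $\mathbf{W}$ is isolated in $\G_\mathbf{w}^{\mathcal{L}_\mathbf{w}}$, no non-context mechanism of $M'$ reads $\mathbf{W}$; hence once $\mathbf{W}\equiv\mathbf{w}$ the sub-SCM of $M$ on $\mathbf{V}\setminus\mathbf{W}$ coincides with that of $M'$, which yields $P^M(\mathbf{O})=\Phi\!\left(P^{M'}(\mathbf{O})\right)$, where $\Phi$ is the distribution-level map that freezes the observed context variables at $\mathbf{w}$ and leaves the remaining marginal untouched, and moreover $P^M(y)=P^{M'}(y)$ together with $P^M(y\mid do(\pi))=P^{M'}(y\mid do(\pi))$ for every policy $\pi\in\Pi$ that does not read its context-variable coordinates.

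With this in hand, transporting the policy is short. Let $g$ be a rule witnessing imitability of $\langle\G^\mathcal{L},\Pi\rangle$, i.e. $P^M(y\mid do(g(P^M(\mathbf{O}))))=P^M(y)$ for all $M\in\M_{\langle\G^\mathcal{L}\rangle}$. For $M'\in\M_{\langle\G_\mathbf{w}^{\mathcal{L}_\mathbf{w}}\rangle}$ with $Q=P^{M'}(\mathbf{O})$, define $\pi^\star$ to be $g(\Phi(Q))$ with its context-variable coordinates frozen at $\mathbf{w}$; then $\pi^\star\in\Pi$, it is uniquely computable from $Q$, and it does not read those coordinates. Applying the construction above to this $M'$ gives $M\in\M_{\langle\G^\mathcal{L}\rangle}$ with $P^M(\mathbf{O})=\Phi(Q)$; since $\mathbf{W}\equiv\mathbf{w}$ in $M$, $do(\pi^\star)$ and $do(g(\Phi(Q)))$ act identically in $M$, so $P^M(y\mid do(\pi^\star))=P^M(y)$, and then the distributional identities above give $P^{M'}(y\mid do(\pi^\star))=P^M(y\mid do(\pi^\star))=P^M(y)=P^{M'}(y)$. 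As $M'$ was arbitrary, $P(y)$ is imitable w.r.t. $\langle\G_\mathbf{w}^{\mathcal{L}_\mathbf{w}},\Pi\rangle$, which is what the lemma asks.

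The step I expect to be most delicate is confirming that the embedded $M$ really lies in $\M_{\langle\G^\mathcal{L}\rangle}$ -- i.e. that gluing $M'$'s mechanisms onto the ``$\mathbf{W}=\mathbf{w}$'' slice breaks none of $\G^\mathcal{L}$'s labels, especially labels on edges incident to $\mathbf{W}$ and labels whose context mixes context- and non-context-variables. This comes down to an exact reading of how Definition~\ref{def: cgraph} reduces the label set (a label survives iff it is compatible with $\mathbf{w}$, and the edge carrying it is deleted iff that label becomes vacuous under $\mathbf{W}=\mathbf{w}$), so that the only labels that could fail to transfer concern parents $f^M_V$ does not read; the piecewise ``default on off-$\mathbf{w}$ configurations'' device is precisely what keeps the extension valid there. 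A minor additional bookkeeping item is ensuring $\Phi$ depends on the observational distribution alone, which holds because $\mathbf{W}$, being isolated, is independent in every SCM of $\G_\mathbf{w}^{\mathcal{L}_\mathbf{w}}$.
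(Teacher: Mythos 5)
Your proof is correct, but it runs in the opposite logical direction from the paper's and is built on a different construction. The paper argues by contraposition: starting from a model $M'$ witnessing non-imitability of $\langle \G_{\mathbf{w}_0}^{\mathcal{L}_{\mathbf{w}_0}}, \Pi\rangle$ with imitation gap $\epsilon>0$, it builds a model $M^\delta\in\M_{\langle\G^\mathcal{L}\rangle}$ in which $\mathbf{W}=\mathbf{w}_0$ holds only with probability $1-\delta$ (the remaining mass spread uniformly, with uniform default mechanisms off the $\mathbf{w}_0$-slice), and then does quantitative bookkeeping -- the reduction of any $\mathbf{W}$-dependent policy to a $\mathbf{W}$-independent mixture, and the sandwich bounds $|P^{M^\delta}(y)-P^{M^\delta}(y|do(\pi))|\geq\epsilon-2\delta>0$ -- to conclude that no policy imitates in $M^\delta$. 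You instead prove the direct implication by embedding an arbitrary $M'\in\M_{\langle\G_\mathbf{w}^{\mathcal{L}_\mathbf{w}}\rangle}$ into a full-LDAG model with $\mathbf{W}\equiv\mathbf{w}$ deterministic and pulling back the imitating policy through the map $\Phi$. What your route buys is the elimination of all the $\epsilon$--$\delta$ estimates: once the embedding is shown to land in $\M_{\langle\G^\mathcal{L}\rangle}$ and to preserve $P(y)$ and $P(y|do(\pi))$ for $\mathbf{W}$-blind policies, the conclusion is immediate; you also handle the cases $X\in\mathbf{W}$ or $Y\in\mathbf{W}$ explicitly, which the paper sidesteps by restricting to $\mathbf{W}\subseteq\mathbf{V}\setminus Y$. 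What the paper's route buys is an explicit adversarial model with full support on $\mathcal{D}_\mathbf{W}$ (so all conditionals such as $P(\cdot\,|\,\mathbf{W}\neq\mathbf{w}_0)$ are well defined) and a quantitative lower bound $\epsilon/2$ on the residual gap. Two points you correctly flag as load-bearing deserve emphasis: your argument needs the paper's (implicit) convention that $\M_{\langle\G^\mathcal{L}\rangle}$ tolerates extra independencies beyond those in $\mathcal{L}$ -- otherwise the degenerate $\mathbf{W}\equiv\mathbf{w}$ model would be excluded -- and it reads ``uniquely computable from $P(\mathbf{O})$'' as a single map $g$ on observational distributions; both conventions are the ones the paper itself uses (its own constructions are equally degenerate, and its proof of the isolation/independence of $\mathbf{W}$ relies on the same reading), so neither is a gap relative to the paper's framework.
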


For instance, a necessary condition for the imitability of $P(y)$ in the graph of Figure \ref{fig: exa} is that $P(y)$ is imitable in both \ref{fig: exb} and \ref{fig: exc}.
Consider the following special case of Lemma \ref{lem:nec}:
if $\mathbf{W}= \mathbf{C}(\mathcal{L})$, then $\G_{\mathbf{w}}^{\mathcal{L}_{\mathbf{w}}}=\G_{\mathbf{w}}$ is a DAG, as $\mathcal{L}_{\mathbf{w}} = \emptyset$ for every $\mathbf{w} \in \mathcal{D}_{\mathbf{W}}$.
In essence, a necessary condition of imitability under CSIs can be expressed in terms of several classic imitability instances:

\begin{restatable}{corollary}{corollary1}\label{cor:1}
    Given an LDAG $\G^ \mathcal{L}$ and a policy space $\Pi$, let $Y$ be an arbitrary variable in $\G^ \mathcal{L}$.
    $P(y)$ is imitable w.r.t. $\langle \G^ \mathcal{L}, \Pi \rangle$ only if $P(y)$ is imitable w.r.t. $\langle \G_\mathbf{c} , \Pi \rangle $, i.e., 
    there exists a $\pi$-backdoor admissible set w.r.t. $\langle \G_\mathbf{c}, \Pi  \rangle $, for every $\mathbf{c} \in \mathcal{D}_{ \mathbf{C} (\mathcal{L})}$.
\end{restatable}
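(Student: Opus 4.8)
The plan is to obtain this as an immediate specialization of Lemma~\ref{lem:nec} combined with the classic imitability characterization of \cite{zhang2020causal} quoted above. Concretely, I would instantiate the necessary condition of Lemma~\ref{lem:nec} with the particular choice $\mathbf{W}=\mathbf{C}(\mathcal{L})$ and observe that, for this choice, the context-induced subgraph $\G_{\mathbf{c}}^{\mathcal{L}_{\mathbf{c}}}$ carries no labels at all, hence equals a plain (latent) DAG $\G_{\mathbf{c}}$. Then imitability w.r.t.\ $\langle\G_{\mathbf{c}},\Pi\rangle$ is, by the equivalence established in \cite{zhang2020causal} and recalled before Definition~\ref{def: cgraph}, the same as the existence of a $\pi$-backdoor admissible set w.r.t.\ $\langle\G_{\mathbf{c}},\Pi\rangle$, which is exactly the claim.

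The one step that deserves an explicit argument is the assertion that $\mathcal{L}_{\mathbf{c}}=\emptyset$ when $\mathbf{c}$ ranges over the full domain $\mathcal{D}_{\mathbf{C}(\mathcal{L})}$. Here I would invoke the definition of the context variables in Equation~\eqref{eq: CL}: every variable occurring in any edge label of $\G^{\mathcal{L}}$ belongs to $\mathbf{C}(\mathcal{L})$. Consequently, for any label $\boldsymbol{\ell}\in\mathcal{D}_{\mathbf{V}'}$ attached to an edge $(V_k,V_j)$ we have $\mathbf{V}'\subseteq\mathbf{C}(\mathcal{L})$, so the restriction $(\mathbf{c})_{\mathbf{V}'}$ is determined by $\mathbf{c}$. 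By Definition~\ref{def: cgraph}, when forming $\G_{\mathbf{c}}^{\mathcal{L}_{\mathbf{c}}}$ we either (i) have $(\mathbf{c})_{\mathbf{V}'}=\boldsymbol{\ell}$, in which case the edge $(V_k,V_j)$ is deleted (the label is ``satisfied''), or (ii) have $(\mathbf{c})_{\mathbf{V}'}\neq\boldsymbol{\ell}$, in which case the label is incompatible with $\mathbf{c}$ and is discarded. In either case no label survives, so $\G_{\mathbf{c}}^{\mathcal{L}_{\mathbf{c}}}$ reduces to the DAG $\G_{\mathbf{c}}$, as was anticipated in the remark preceding the statement.

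With that observation in hand, the rest is a two-line chain: by Lemma~\ref{lem:nec} applied to $\mathbf{W}=\mathbf{C}(\mathcal{L})$, imitability of $P(y)$ w.r.t.\ $\langle\G^{\mathcal{L}},\Pi\rangle$ implies imitability of $P(y)$ w.r.t.\ $\langle\G_{\mathbf{c}},\Pi\rangle$ for every $\mathbf{c}\in\mathcal{D}_{\mathbf{C}(\mathcal{L})}$; and since $\G_{\mathbf{c}}$ is a latent DAG (with $Y\in\mathbf{De}(X)\cap\mathbf{L}$ still holding, or else imitability is trivial there), the result of \cite{zhang2020causal} gives that this is equivalent to the existence of a $\pi$-backdoor admissible set w.r.t.\ $\langle\G_{\mathbf{c}},\Pi\rangle$.

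Since this is a corollary, I do not expect a genuine obstacle; the only point requiring a little care is purely bookkeeping, namely that the policy space $\Pi$ (equivalently the set $\mathbf{Pa}^{\Pi}\subseteq\mathbf{O}\setminus\mathbf{De}(X)$) remains well-defined when passing to $\G_{\mathbf{c}}$, because deleting edges can only shrink $\mathbf{De}(X)$; this is handled in exactly the same way as in the proof of Lemma~\ref{lem:nec}, so I would simply refer to it rather than repeat the argument.
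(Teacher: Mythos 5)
Your proposal is correct and follows essentially the same route as the paper: instantiate Lemma~\ref{lem:nec} with $\mathbf{W}=\mathbf{C}(\mathcal{L})$, observe that every surviving label's conditioning variables lie in $\mathbf{C}(\mathcal{L})$ so each label is either satisfied (edge deleted) or incompatible (label discarded), making $\G_{\mathbf{c}}^{\mathcal{L}_{\mathbf{c}}}$ a plain DAG, and then invoke the classic equivalence with the existence of a $\pi$-backdoor admissible set. The extra bookkeeping you supply (why $\mathcal{L}_{\mathbf{c}}=\emptyset$, and that $\Pi$ remains well-defined) is sound and, if anything, more explicit than the paper's treatment.
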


It is noteworthy that although the subgraphs $\G_\mathbf{c}$ in Corollary \ref{cor:1} are defined in terms of realizations of $\mathbf{C}(\mathcal{L})$, the number of such subgraphs does not exceed $ 2^{|\mathbf{E}|}$.
This is due to the fact that $\G_\mathbf{c}$s share the same set of vertices, and their edges are subsets of the edges of $\G^\mathcal{L}$.

Although deciding the classic imitability is straightforward, the number of instances in Corollary \ref{cor:1} can grow exponentially in the worst case.
However, in view of the following hardness result, a more efficient criterion in terms of computational complexity cannot be expected.

\begin{restatable}{theorem}{theoremNP}\label{thm:NP}
    Given an LDAG $\G^\mathcal{L}$ and a policy space $\Pi$, deciding the imitability of $P(y)$ w.r.t. $ \langle \G^\mathcal{L}, \Pi \rangle$ is NP-hard.
\end{restatable}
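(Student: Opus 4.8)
The plan is to establish NP-hardness by a polynomial-time reduction from a known NP-complete problem whose structure mirrors the ``exponentially many context-induced subgraphs must each be imitable'' obstruction exposed by Corollary~\ref{cor:1}. The natural candidate is \textsc{3-SAT} (or a monotone variant such as \textsc{Monotone 3-SAT} or a covering problem like \textsc{Set Cover} / \textsc{Hitting Set}). Given a \textsc{3-SAT} instance $\phi$ over variables $u_1,\dots,u_n$ with clauses $c_1,\dots,c_m$, I would construct an LDAG $\G^\mathcal{L}_\phi$ together with a policy space $\Pi$ (i.e.\ a choice of $\mathbf{Pa}^\Pi$) such that $P(y)$ is imitable w.r.t.\ $\langle\G^\mathcal{L}_\phi,\Pi\rangle$ if and only if $\phi$ is satisfiable. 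The size of the gadget should be polynomial in $n+m$; the exponential blow-up lives entirely in the $2^{|\mathbf{C}(\mathcal{L})|}$ contexts, which is exactly where the computational hardness should come from.

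The key design idea: introduce one binary context variable $C_i$ for each Boolean variable $u_i$, so that a realization $\mathbf{c}\in\mathcal{D}_{\mathbf{C}(\mathcal{L})}=\{0,1\}^n$ encodes a truth assignment. For each clause $c_j$, build a local gadget on a small number of vertices, feeding into the action $X$ and ultimately into the reward $Y$, with edge labels chosen so that in context $\mathbf{c}$ the ``confounding'' edge associated with clause $c_j$ is \emph{removed} precisely when the assignment $\mathbf{c}$ satisfies $c_j$. Concretely, the edge carrying the latent confounding for clause $c_j$ should carry labels corresponding to each literal of $c_j$, so that satisfying any one literal of $c_j$ kills that edge. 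Then, by Corollary~\ref{cor:1} and the classic $\pi$-backdoor characterization of \cite{zhang2020causal}, imitability of $P(y)$ w.r.t.\ $\langle\G_\mathbf{c},\Pi\rangle$ holds iff in $\G_\mathbf{c}$ the set $\mathbf{Z}=\mathbf{An}(\{X,Y\})\cap\mathbf{Pa}^\Pi$ d-separates $X$ from $Y$ in $(\G_\mathbf{c})_{\underline X}$, which (by construction) holds iff no clause gadget leaves an open back-door path, i.e.\ iff $\mathbf{c}$ satisfies every clause. For the ``if'' direction one must also exhibit, when $\phi$ is unsatisfiable, a witnessing context $\mathbf{c}$ together with the fact that \emph{no} policy in $\Pi$ can match $P(y)$ in that context-induced subgraph — this follows from the \emph{necessity} direction of Corollary~\ref{cor:1} (contrapositive: if some $\G_\mathbf{c}$ is not imitable then $\G^\mathcal{L}$ is not imitable). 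For the converse (``$\phi$ satisfiable $\Rightarrow$ $P(y)$ imitable''), I would need a matching \emph{sufficiency} argument: either invoke a sufficiency result available in the paper under its structural assumption while arguing the gadget meets that assumption, or — more robustly — directly construct the imitating policy by specifying $\pi(X\mid\mathbf{Pa}^\Pi)$ context-by-context, using the satisfying assignment to pick, within each context, the relevant conditional of $P$ so that the back-door adjustment is valid; one then checks that these context-wise pieces glue into a single well-defined policy $\pi\in\Pi$ that is uniquely computable from $P(\mathbf{O})$ and satisfies $P^M(y\mid do(\pi))=P^M(y)$ for all $M\in\M_{\langle\G^\mathcal{L}_\phi\rangle}$.

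I expect the main obstacle to be the converse (soundness) direction — showing that satisfiability actually \emph{suffices} for imitability of the original LDAG. Corollary~\ref{cor:1} only gives necessity, so it cannot be used to conclude imitability; one genuinely has to produce a single global policy and verify it works for every compatible SCM, which is where a naive gadget can fail (the context-wise optimal adjustment sets might be incompatible, or matching $P(y)$ within each context might not aggregate to matching $P(y)$ unconditionally, since $Y\in\mathbf{De}(X)$ and the context variables themselves may be ancestors of $X$ and $Y$). The design must therefore be arranged so that the context variables $C_i$ lie in $\mathbf{Pa}^\Pi$ (so the policy may read them) and so that a valid per-context back-door set, built from a satisfying assignment, assembles into a legitimate $\pi$-backdoor-style policy. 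A secondary technical point is bookkeeping on the number of distinct $\G_\mathbf{c}$: one should confirm (as the text after Corollary~\ref{cor:1} notes) that it is at most $2^{|\mathbf{E}|}$, so the reduction itself is clearly polynomial-size even though the decision problem ranges over it. Care with latent versus observed status ($Y\in\mathbf{L}$, $X\in\mathbf{O}$, $\mathbf{Pa}^\Pi\subseteq\mathbf{O}\setminus\mathbf{De}(X)$) and with keeping all context variables discrete/binary rounds out the verification.
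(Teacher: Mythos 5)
Your high-level strategy (reduce from 3-SAT, encode truth assignments as contexts, and use edge labels so that clause satisfaction toggles confounding) is the same as the paper's, but the polarity of your gadget is inverted, and this breaks the reduction. You arrange matters so that the confounding edge associated with clause $c_j$ is \emph{removed} whenever the context satisfies $c_j$; hence the context-induced graph $\G_{\mathbf{c}}$ is imitable iff $\mathbf{c}$ satisfies \emph{every} clause. But by Corollary~\ref{cor:1}, imitability of the LDAG requires imitability of $\G_{\mathbf{c}}$ for \emph{every} context $\mathbf{c}$, so your construction ties global imitability to ``every assignment satisfies $\phi$,'' i.e., to $\phi$ being a tautology --- and a CNF formula is a tautology iff each clause individually is, which is decidable in polynomial time. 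In particular, your claimed equivalence ``imitable iff $\phi$ satisfiable'' fails outright: if $\phi$ is satisfiable but not a tautology, any falsifying assignment yields a context whose induced graph is confounded, and the necessity direction of Corollary~\ref{cor:1} then certifies \emph{non}-imitability. The direction you flag as the ``main obstacle'' is therefore not merely hard to prove; it is false for the gadget you describe.

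The paper avoids this by making a context \emph{bad} (confounded) exactly when the assignment satisfies $\phi$: the clause vertices are arranged in a chain of latent variables $S_0 \to S_1 \to \dots \to S_m$ forming a single back-door path from $X$ (via $S_0 \to X$) to $Y$ (via $S_m \to Y$), and each link $S_{i-1}\to S_i$ is labeled absent when clause $S_i$ is \emph{not} satisfied. The whole path is open iff all clauses are simultaneously satisfied, so satisfiability of $\phi$ becomes equivalent to the existence of a confounded context and hence (via an explicitly constructed SCM witnessing a gap between $P^M(y)$ and $P^M(y\vert do(\pi))$ for all $\pi$) to non-imitability; unsatisfiability guarantees the chain is broken in every context, after which $\pi(x)=P(x)$ is shown directly to imitate. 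Note this means the reduction maps satisfiability to \emph{non}-imitability, not to imitability. To repair your proposal you need the same conjunction-along-a-path trick (or any gadget in which satisfying \emph{all} clauses is what opens the confounder), so that the universal quantifier over contexts in Corollary~\ref{cor:1} lines up with the existential quantifier of satisfiability on the complement. A further gap, even granting a corrected gadget, is that Corollary~\ref{cor:1} alone does not establish the imitable case: one must directly verify that a concrete policy works for every compatible SCM, as the paper does with $\pi(x)=P(x)$ and the auxiliary independence $I_X \perp Y \mid X$.
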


Although Theorem \ref{thm:NP} indicates that determining imitability under CSIs might be intractable in general, as we shall see in the next section taking into account only a handful of CSI relations can render previously non-imitable instances imitable.
Before concluding this section, we consider a special yet important case of the general problem.
Specifically, for the remainder of this section, we assume that $\mathbf{Pa}(\mathbf{C}(\mathcal{L}))\subseteq\mathbf{C}(\mathcal{L})$.
That is, the context variables have parents only among the context variables\footnote{
This is a generalization of \emph{control variables} in the setup of \cite{mokhtarian2022causal}.}.
Under this assumption, the necessary criterion of Corollary \ref{cor:1} turns out to be sufficient for imitability as well.
More precisely, we have the following characterization.
\begin{restatable}{proposition}{propif}\label{prop: forallc}
    Given an LDAG $\G^{\mathcal{L}}$ where $\mathbf{Pa}(\mathbf{C}(\mathcal{L})) \subseteq \mathbf{C}(\mathcal{L})$ and a policy space $\Pi$, let $Y$ be an arbitrary variable in $\G^{\mathcal{L}}$.
    $P(y)$ is imitable w.r.t. $\langle \G^\mathcal{L}, \Pi \rangle$ if and only if $P(y)$ is imitable w.r.t. $\langle \G_\mathbf{c}, \Pi  \rangle $, for every $\mathbf{c} \in \mathcal{D}_{ \mathbf{C} (\mathcal{L})}$.
\end{restatable}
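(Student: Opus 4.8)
The plan is to prove the nontrivial direction, i.e., that under the structural assumption $\mathbf{Pa}(\mathbf{C}(\mathcal{L})) \subseteq \mathbf{C}(\mathcal{L})$, imitability in every context-realization $\G_\mathbf{c}$ implies imitability of $P(y)$ w.r.t.\ $\langle \G^\mathcal{L}, \Pi \rangle$; the converse is already Corollary~\ref{cor:1}. Fix $\mathbf{C} \coloneqq \mathbf{C}(\mathcal{L})$. The structural assumption means $\mathbf{C}$ is an ancestral set whose variables have no parents outside $\mathbf{C}$; in particular the joint distribution factorizes as $P(\mathbf{V}) = P(\mathbf{C})\cdot P(\mathbf{V}\setminus\mathbf{C}\mid\mathbf{C})$, and within any context $\mathbf{C}=\mathbf{c}$ the remaining mechanisms are exactly those of the DAG $\G_\mathbf{c}$. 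By Corollary~\ref{cor:1}'s hypothesis, for every $\mathbf{c}\in\mathcal{D}_\mathbf{C}$ there is a $\pi$-backdoor admissible set w.r.t.\ $\langle \G_\mathbf{c},\Pi\rangle$, and by Lemma~\ref{lem:backdoor} we may take it to be $\mathbf{Z}_\mathbf{c} \coloneqq \mathbf{An}_{\G_\mathbf{c}}(\{X,Y\}) \cap \mathbf{Pa}^\Pi$, with $Y \perp X \mid \mathbf{Z}_\mathbf{c}$ in $(\G_\mathbf{c})_{\underline{X}}$, and the policy $\pi_\mathbf{c}(X\mid \mathbf{Z}_\mathbf{c}) = P(X\mid \mathbf{Z}_\mathbf{c}, \mathbf{c})$ is imitating in $\G_\mathbf{c}$.

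The key step is to glue these per-context policies into a single policy $\pi^*$ in $\Pi$. The natural candidate is
\[
\pi^*(X \mid \mathbf{Pa}^\Pi) \;=\; P\bigl(X \,\big|\, \mathbf{Z}_{\mathbf{c}}, \mathbf{C}=\mathbf{c}\bigr)\Big|_{\mathbf{c} = (\mathbf{pa}^\Pi)_\mathbf{C}},
\]
i.e., the policy reads off the realized values of the context variables from its inputs (note $\mathbf{C} \subseteq \mathbf{Pa}^\Pi$ must hold for this to be well-defined — this needs to be argued from the assumption that context variables are observed and non-descendants of $X$, or otherwise incorporated into the setup), selects the branch $\G_\mathbf{c}$, and applies $\pi_\mathbf{c}$. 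This $\pi^*$ is uniquely computable from $P(\mathbf{O})$ since each $\pi_\mathbf{c}$ is. To verify $P^M(y\mid do(\pi^*)) = P^M(y)$, I would condition on $\mathbf{C}$: write
\[
P^M(y \mid do(\pi^*)) \;=\; \sum_{\mathbf{c}} P^M\bigl(y \mid do(\pi^*), \mathbf{c}\bigr)\, P^M(\mathbf{c}),
\]
using that $do(\pi^*)$ does not alter the mechanisms generating $\mathbf{C}$ (since $X\notin\mathbf{C}$ and $\mathbf{C}$ has no parents outside itself, so it is upstream of the intervention). Then argue that, conditioned on $\mathbf{C}=\mathbf{c}$, the submodel $M_\mathbf{c}$ obtained by fixing $\mathbf{C}=\mathbf{c}$ is a member of $\M_{\langle\G_\mathbf{c}\rangle}$, that $do(\pi^*)$ restricted to this submodel acts exactly as $do(\pi_\mathbf{c})$, and hence $P^M(y\mid do(\pi^*),\mathbf{c}) = P^{M_\mathbf{c}}(y\mid do(\pi_\mathbf{c})) = P^{M_\mathbf{c}}(y) = P^M(y\mid\mathbf{c})$ by the per-context imitability. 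Summing over $\mathbf{c}$ with weights $P^M(\mathbf{c})$ recovers $P^M(y)$.

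The main obstacle is the middle claim: that conditioning the SCM $M\in\M_{\langle\G^\mathcal{L}\rangle}$ on $\mathbf{C}=\mathbf{c}$ yields a bona fide SCM compatible with $\G_\mathbf{c}$, and that the intervention $do(\pi^*)$ and the conditioning commute cleanly so that $do(\pi^*)$ "looks like" $do(\pi_\mathbf{c})$ inside the $\mathbf{c}$-branch. This requires care because $\mathbf{C}$ may contain latent variables and $\pi_\mathbf{c}$ conditions on $\mathbf{Z}_\mathbf{c}$ which generally differs from $\mathbf{Z}_{\mathbf{c}'}$ across branches — I must check that $\pi^*(X\mid\mathbf{pa}^\Pi)$, which has access to \emph{all} of $\mathbf{Pa}^\Pi$, indeed reproduces $P(X\mid\mathbf{Z}_\mathbf{c},\mathbf{c})$ and that the extra conditioning variables in $\mathbf{Pa}^\Pi\setminus(\mathbf{Z}_\mathbf{c}\cup\mathbf{C})$ do not interfere (which follows from the d-separation $Y\perp X\mid\mathbf{Z}_\mathbf{c}$ in $(\G_\mathbf{c})_{\underline{X}}$ plus a backdoor-style argument within the $\mathbf{c}$-branch, exactly mirroring the classic soundness proof of the $\pi$-backdoor criterion in \cite{zhang2020causal}). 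The structural assumption $\mathbf{Pa}(\mathbf{C})\subseteq\mathbf{C}$ is precisely what makes the conditioning-on-$\mathbf{C}$ decomposition legitimate — without it, conditioning on $\mathbf{C}$ could open paths or change the mechanisms of non-context variables, breaking the reduction — so I would emphasize where that hypothesis is used.
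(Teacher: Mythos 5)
Your proposal is correct and follows essentially the same route as the paper's proof: construct per-context imitating policies for each $\G_\mathbf{c}$, glue them with an indicator on the realized value of $(\mathbf{Pa}^\Pi)_{\mathbf{C}(\mathcal{L})}$, and verify imitation by decomposing $P^M(y\vert do(\pi^*))$ over contexts, using $\mathbf{Pa}(\mathbf{C}(\mathcal{L}))\subseteq\mathbf{C}(\mathcal{L})$ to identify the submodel fixed at $\mathbf{C}=\mathbf{c}$ with a member of $\M_{\langle\G_\mathbf{c}\rangle}$ (the paper builds $M_\mathbf{c}$ via $do(\mathbf{c})$ and then shows this coincides with conditioning, which is the same point you flag). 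The subtlety you raise about needing $\mathbf{C}(\mathcal{L})\subseteq\mathbf{Pa}^\Pi$ for the glued policy to be well-defined is real but is equally implicit in the paper's construction.
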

\begin{wrapfigure}{r}{0.6\textwidth}
\begin{minipage}{0.6\textwidth}
\vspace{-.6cm}
\begin{algorithm}[H]
\caption{Imitation w.r.t. $\langle \G^\mathcal{L}, \Pi \rangle$}
\label{alg: imitate}
\begin{algorithmic}[1] 
\Function{ Imitate1 } {$\G^\mathcal{L}, \Pi, X, Y$}
    \State Compute $ \mathbf{C}\coloneqq\mathbf{C}(\mathcal{L})$ using Equation \eqref{eq: CL}
    \For{$\mathbf{c} \in \mathcal{D}_\mathbf{C}$}
        \State Construct a DAG $\mathcal{G}_\mathbf{c}$ using Definition \ref{def: cgraph}
        \If{\Call{FindSep } {$\G_\mathbf{c}, \Pi, X, Y$} Fails}
            \State \Return FAIL
        \Else
            \State $\mathbf{Z}_\mathbf{c}\gets$ \Call{FindSep } {$\G_\mathbf{c},\Pi, X, Y$}
            \State $\pi_\mathbf{c}(X|\mathbf{pa}^\Pi) \gets {P(X|(\mathbf{pa}^\Pi)}_{\mathbf{z}_\mathbf{c}})$
        \EndIf
    \EndFor
    \State $\pi^{*}(X\vert\mathbf{pa}^\Pi) \! \gets \! \sum_{\mathbf{c} \in \mathcal{D}_{\mathbf{C}}}\!\! \mathbbm{1}\{ {(\mathbf{pa}^\Pi) }_\mathbf{C}\!\!=\!\mathbf{c}\} \pi_\mathbf{c}(X\vert\mathbf{pa}^\Pi)$
    \State\Return $\pi^*$
\EndFunction
\end{algorithmic}
\end{algorithm}
\vspace{-.8cm}
\end{minipage}
\end{wrapfigure}

The proof of sufficiency, which is constructive, appears in Appendix \ref{apx:proofs}.
The key insight here is that an optimal imitation policy is constructed based on the imitation policies corresponding to the instances $\langle\G_{\mathbf{c}},\Pi\rangle$.
In view of proposition \ref{prop: forallc}, we provide an algorithmic approach for finding an optimal imitation policy under CSIs, as described in Algorithm \ref{alg: imitate}.
This algorithm takes as input an LDAG $\G^ \mathcal{L}$, a policy space 
$\Pi$, an action variable $X$ and a latent reward $Y$.
It begins with identifying the context variables $\mathbf{C}$($=\mathbf{C}(\mathcal{L})$), defined by Equation~\eqref{eq: CL} (Line 2).
Next, for each realization $\mathbf{c}\in\mathcal{D}_\mathbf{C}$, the corresponding context-induced subgraph (Def.~\ref{def: cgraph}) is built (which is a DAG).
If $P(y)$ is not imitable in any of these DAGs, the algorithm fails, i.e., declares 
$P(y)$ is not imitable in $\G^ \mathcal{L}$. The imitability in each DAG is checked through a d-separation based on Lemma \ref{lem:backdoor} (for further details of the function $FindSep$, see Algorithm \ref{alg: find pi} in Appendix \ref{apx:alg}.)
Otherwise, for each realization $\mathbf{c} \in \mathcal{D}_\mathbf{C}$,
an optimal policy $\pi_c$ is learned through the application of the $\pi$-backdoor admissible set criterion (line 9).
If such a policy exists for every realization of $\mathbf{C}$, $P(y)$ is imitable w.r.t. $\langle\G^\mathcal{L},\Pi\rangle$ due to Proposition \ref{prop: forallc}.
An optimal imitating policy $\pi^*$ is computed based on the previously identified policies $\pi_\mathbf{c}$. 
Specifically, $\pi^*$, the output of the algorithm, is defined as
\[\pi^{*}(X\vert\mathbf{pa}^\Pi)  \gets  \sum_{\mathbf{c} \in \mathcal{D}_{\mathbf{C}}} \mathbbm{1}\{ {(\mathbf{pa}^\Pi) }_\mathbf{C}=\mathbf{c}\} \pi_\mathbf{c}(X\vert\mathbf{pa}^\Pi).\]

\begin{restatable}{theorem}{theroemalg}\label{th: alg}
    Algorithm \ref{alg: imitate} is sound and complete for determining the imitability of $P(y)$ w.r.t. $\langle \G^\mathcal{L}, \Pi \rangle $ and finding the optimal imitating policy in the imitable case, under the assumption that $\mathbf{Pa}(\mathbf{C}(\mathcal{L}))\subseteq\mathbf{C}(\mathcal{L})$.
\end{restatable}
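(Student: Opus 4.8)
The plan is to decompose the claim into (i)~correctness of the binary decision the algorithm makes (return a policy vs.\ declare non‑imitability) and (ii)~correctness of the policy $\pi^*$ returned in the former case. Part~(i) gives both completeness and the soundness of the \textsc{fail} branch; part~(ii) gives the soundness of the policy output. Before either, I would record the routine facts that make the statement meaningful: since $\mathbf{C}(\mathcal{L})$ consists of finitely many discrete variables, the loop over $\mathbf{c}\in\mathcal{D}_{\mathbf{C}}$ is finite and the algorithm terminates; whenever \textsc{FindSep} succeeds on $\G_\mathbf{c}$ it returns a set $\mathbf{Z}_\mathbf{c}\subseteq\mathbf{Pa}^\Pi$ that d‑separates $X$ and $Y$ in $(\G_\mathbf{c})_{\underline{X}}$ (by Lemma~\ref{lem:backdoor} one may always take $\mathbf{Z}_\mathbf{c}=\mathbf{An}(\{X,Y\})\cap\mathbf{Pa}^\Pi$), so $\pi_\mathbf{c}(X\mid\mathbf{pa}^\Pi)=P(X\mid(\mathbf{pa}^\Pi)_{\mathbf{z}_\mathbf{c}})$ is a well‑defined element of $\Pi$ computable from $P(\mathbf{O})$; and for every $\mathbf{pa}^\Pi$ exactly one indicator in the definition of $\pi^*$ is nonzero (using $\mathbf{C}\subseteq\mathbf{Pa}^\Pi$, implicit in the restriction $(\mathbf{pa}^\Pi)_\mathbf{C}$), so $\pi^*\in\Pi$ and is likewise computable from $P(\mathbf{O})$.

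\textbf{Part (i).} The algorithm returns \textsc{fail} iff \textsc{FindSep} fails on some $\G_\mathbf{c}$, i.e.\ iff for some $\mathbf{c}$ there is no $\pi$‑backdoor admissible set w.r.t.\ $\langle\G_\mathbf{c},\Pi\rangle$, which by the classic characterization of \cite{zhang2020causal} is equivalent to $P(y)$ not being imitable w.r.t.\ $\langle\G_\mathbf{c},\Pi\rangle$. By Proposition~\ref{prop: forallc} (here the hypothesis $\mathbf{Pa}(\mathbf{C}(\mathcal{L}))\subseteq\mathbf{C}(\mathcal{L})$ is used), the latter holds for some $\mathbf{c}\in\mathcal{D}_{\mathbf{C}(\mathcal{L})}$ iff $P(y)$ is not imitable w.r.t.\ $\langle\G^\mathcal{L},\Pi\rangle$. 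Hence the algorithm outputs \textsc{fail} exactly when $P(y)$ is not imitable w.r.t.\ $\langle\G^\mathcal{L},\Pi\rangle$, which yields completeness (imitable $\Rightarrow$ a policy is returned) and soundness of the \textsc{fail} branch simultaneously. (Only the direction ``\textsc{fail} $\Rightarrow$ not imitable'' is needed for soundness, and it already follows from Corollary~\ref{cor:1} without the structural assumption; the converse, needed for completeness, is where Proposition~\ref{prop: forallc} is essential.)

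\textbf{Part (ii).} Suppose the algorithm returns $\pi^*$; I must show $P^M(y\mid do(\pi^*))=P^M(y)$ for every $M\in\M_{\langle\G^\mathcal{L}\rangle}$, which is precisely the assertion that the policy assembled in line~11 is imitating — the content of the constructive sufficiency direction of Proposition~\ref{prop: forallc}. I would reproduce its core: because $\mathbf{Pa}(\mathbf{C}(\mathcal{L}))\subseteq\mathbf{C}(\mathcal{L})$, the set $\mathbf{C}(\mathcal{L})$ is ancestrally closed and (being contained in $\mathbf{Pa}^\Pi\subseteq\mathbf{O}\setminus\mathbf{De}(X)$) excludes $X$, so replacing $f_X$ by $\pi^*$ does not change the law of $\mathbf{C}(\mathcal{L})$, and one can write $P^M(y\mid do(\pi^*))=\sum_{\mathbf{c}}P^M(y\mid\mathbf{C}=\mathbf{c},\,do(\pi^*))\,P^M(\mathbf{C}=\mathbf{c})$. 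On the event $\mathbf{C}=\mathbf{c}$, $\pi^*$ coincides with $\pi_\mathbf{c}$, and, by the semantics of the labels together with Definition~\ref{def: cgraph}, the model $M$ conditioned on $\mathbf{C}=\mathbf{c}$ is a member of $\M_{\langle\G_\mathbf{c}\rangle}$; since $\mathbf{z}_\mathbf{c}$ is a $\pi$‑backdoor admissible set w.r.t.\ $\langle\G_\mathbf{c},\Pi\rangle$, the classic result gives $P^M(y\mid\mathbf{C}=\mathbf{c},do(\pi_\mathbf{c}))=P^M(y\mid\mathbf{C}=\mathbf{c})$. Summing over $\mathbf{c}$ yields $P^M(y\mid do(\pi^*))=P^M(y)$, as required.

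\textbf{Main obstacle.} The delicate step is the context‑conditioning in Part~(ii): justifying that conditioning $M$ on $\mathbf{C}(\mathcal{L})=\mathbf{c}$ produces a model compatible with $\G_\mathbf{c}$ and that $\pi_\mathbf{c}$'s backdoor adjustment remains valid \emph{inside} that context. This is exactly where Definition~\ref{def: cgraph} deletes not only the labelled‑away edges but also \emph{all} edges incident to $\mathbf{C}(\mathcal{L})$ — preventing the conditioning from opening collider paths — and where $\mathbf{Pa}(\mathbf{C}(\mathcal{L}))\subseteq\mathbf{C}(\mathcal{L})$ lets us treat $\mathbf{C}(\mathcal{L})$ as an exogenous root block whose value may be fixed without interfering with $do(\pi^*)$ or with the $\pi$‑backdoor adjustment in each $\G_\mathbf{c}$. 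If Proposition~\ref{prop: forallc} and its constructive proof are taken as given, Part~(ii) is immediate and the whole argument reduces to Part~(i) plus the well‑definedness and termination remarks above.
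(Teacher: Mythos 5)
Your proposal is correct and follows essentially the same route as the paper: the paper proves Theorem~\ref{th: alg} implicitly by combining Lemma~\ref{lem:backdoor} (so that \textsc{FindSep}'s single candidate set decides existence of a $\pi$-backdoor admissible set in each $\G_\mathbf{c}$) with Corollary~\ref{cor:1} for the \textsc{fail} branch and the constructive sufficiency proof of Proposition~\ref{prop: forallc}, which builds exactly the policy $\pi^*=\sum_{\mathbf{c}}\mathbbm{1}\{(\mathbf{pa}^\Pi)_\mathbf{C}=\mathbf{c}\}\pi_\mathbf{c}$ that the algorithm outputs. The only cosmetic difference is that you condition $M$ on $\mathbf{C}=\mathbf{c}$ where the paper constructs the submodel $M_\mathbf{c}$ via $do(\mathbf{c})$ and then shows the two coincide using $\mathbf{Pa}(\mathbf{C})\subseteq\mathbf{C}$ — precisely the point you flag as the delicate step.
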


\vspace{-.2cm}
\section{Leveraging Causal Effect Identifiability for Causal Imitation Learning}\label{sec:data}
\vspace{-.1cm}
Arguably, the main challenge in imitation learning stems from the latent reward.
However, in certain cases, there exist observable variables $\mathbf{S}\subseteq\mathbf{O}$ such that $P(\mathbf{S}\vert do(\pi))=P(\mathbf{S})$ implies $P(y\vert do(\pi))=P(y)$ for any policy $\pi\in\Pi$.
Such $\mathbf{S}$ is said to be an imitation surrogate for $Y$ \cite{zhang2020causal}.
Consider, for instance, the graph of Figure \ref{fig: exma}, where $X$ represents the pricing strategy of a company, $C$ is a binary variable indicating recession ($C=0$) or expansion ($C=1$) period, $U$ denotes factors such as demand and competition in the market, $S$ represents the sales and $Y$ is the overall profit of the company.
Due to Proposition \ref{prop: forallc}, $P(y)$ is not imitable in this graph.
On the other hand, the sales figure ($S$) is an imitation surrogate for the profit ($Y$), as it can be shown that whenever $P(S\vert do(\pi))=P(S)$ for a given policy $\pi$, $P(y\vert do(\pi))=P(y)$ holds for the same policy.
Yet, according to do-calculus, $P(S\vert do(\pi))$ itself is not identifiable due to the common confounding $U$.
On the other hand, we note that the company's pricing strategy ($X$) becomes independent of demand ($U$) during a recession ($C=0$), as the company may not have enough customers regardless of the price it sets.
This CSI relation amounts to the following identification formula for the effect of an arbitrary policy $\pi$ on sales figures:
\begin{equation}\label{eq:sdopi}P(s\vert do(\pi))=\sum_{x,c}P(s\vert x,C=0)\pi(x\vert c)P(c),\end{equation}
where all of the terms on the right-hand side are known given the observations. Note that even though $S$ is a surrogate in Figure \ref{fig: exma}, without the CSI $C=0$, we could not have written Equation \eqref{eq:sdopi}. 
Given the identification result of this equation, if the set of equations $P(s\vert do(\pi))=P(s)$ has a solution $\pi^*$, then $\pi^*$ becomes an imitation policy for $P(y)$ as well \cite{zhang2020causal}.
It is noteworthy that solving the aforementioned linear system of equations for $\pi^*$ is straightforward, for it boils down to a matrix inversion\footnote{In the discrete case, and kernel inversion in the continuous case.}.
In the example discussed, although the graphical criterion of Proposition \ref{prop: forallc} does not hold, the data-specific parameters could yield imitability in the case that these equations are solvable.
We therefore say
$P(y)$ is imitable w.r.t. $\langle \G^\mathcal{L}, \Pi, P(\mathbf{O}) \rangle$, as opposed to `w.r.t. $\langle \G^\mathcal{L}, \Pi\rangle$'.
Precisely, we say $P(y)$ is imitable w.r.t. $\langle \G^\mathcal{L}, \Pi, P(\mathbf{O}) \rangle$ if for every $M \in \mathcal{M}_{\langle \G^\mathcal{L} \rangle}$ such that $P^M(\mathbf{O})= P(\mathbf{O})$, there exists a policy $\pi$ such that $P^M(y|do(\pi))= P^M(y)$.

The idea of surrogates could turn out to be useful to circumvent the imitability problem when the graphical criterion does not hold.
In Figure \ref{fig: exmb}, however, neither graphical criteria yields imitability nor any imitation surrogates exist.
In what follows, we discuss how CSIs can help circumvent the problem of imitability in even in such instances.
Given an LDAG $\G^\mathcal{L}$ and a context $\mathbf{C}=\mathbf{c}$, we denote the \textit{context-specific} DAG w.r.t. $\langle \G^\mathcal{L}, \mathbf{c} \rangle$ by $\G^ \mathcal{L}(\mathbf{c})$ where $\G^ \mathcal{L}(\mathbf{c})$ is the DAG obtained by deleting all the spurious edges, i.e., the edges that are absent given the context $\mathbf{C}=\mathbf{c}$, from $\G^\mathcal{L}$.
\begin{figure}[t] 
    \centering
    \tikzstyle{block} = [draw, fill=white, circle, text centered,inner sep= 0.2cm]
    \tikzstyle{input} = [coordinate]
    \tikzstyle{output} = [coordinate]
    \begin{subfigure}{0.24\textwidth}
	    \centering
	\begin{tikzpicture}[scale=0.6, every node/.style={scale=0.6}]
            \tikzset{edge/.style = {->,> = latex'}}
    	\node [block, label=center:$X$](X) {};
            \node [block, below=  0.5cm of X, label=center:$C$](C) {};
            \node [block, right=  0.8cm of X, label=center:$S$](S) {}; 
            \node [block, dashed, right=  0.7cm of S, label=center:$Y$](Y) {};
            \node [block, dashed, above = 0.5cm of S, label=center:$U$](U) {};

            \cpath (C) edge[style = {->}] (X);
            \cpath (X) edge[style = {->}] (S);
            \cpath (U) edge[dashed] node[rotate=34, fill=white, anchor=center, pos=0.5] {\color{black} \scriptsize $C\!=0 \!$} (X);
            \cpath (U) edge[style = {->}, dashed] (S);
            \cpath (S) edge[style = {->}] (Y);
        \end{tikzpicture}
    \caption{$\mathcal{H}^\mathcal{L}$ with a non-id surrogate}
    \label{fig: exma}
	\end{subfigure}\hspace{0cm}
    \begin{subfigure}{0.24\textwidth}
	    \centering
	   \begin{tikzpicture}[scale=0.6, every node/.style={scale=0.6}]
            \tikzset{edge/.style = {->,> = latex'}}
    	\node [block, label=center:$X$](X) {};
            \node [block, right=  0.5cm of X, label=center:$T$](T) {};
            \node [block, above=  0.7cm of X, label=center:$S$](S) {}; 
            \node [block, right=  1cm of S, label=center:$W$](W) {};
            \node [block, right=  0.5cm of W, label=center:$Y$](Y) {};
            \node [block, dashed, left = 0.5cm of S, label=center:$U_1$](U1) {};
            \node [block, above right = 0.6cm and 0.5cm of S, label=center:$C$](C) {};
            \node [block, dashed, above right = 0.6cm and 0.2cm of W, label=center:$U_2$](U2) {};
            \node [block, dashed, right = 1cm of T, label=center:$U_3$](U3) {};
            
            \cpath (T) edge[style = {->}] (X);
            \cpath (X) edge[style = {->}] (S);
            \cpath (S) edge node[rotate=0, fill=white, anchor=center, pos=0.5] {\color{black} \scriptsize $C\!=1 \!$} (W);
            \cpath (W) edge[style = {->}] (Y);
            \cpath (U1) edge[style = {->}, dashed] (S);
            \cpath (U1) edge[dashed, bend right=30] node[rotate= -40, fill=white, anchor=center, pos=0.5] {\color{black} \scriptsize $T\!=0 \!$} (X);
            \cpath (C) edge[style = {->}] (W);
            \cpath (C) edge[style = {->}] (S);
            \cpath (U2) edge[style = {->}, dashed] (W);
            \cpath (U2) edge[style = {->}, dashed] (Y);
            \cpath (U3) edge[style = {->}, dashed] (Y);
            \cpath (U3) edge[dashed] node[rotate=-20, fill=white, anchor=center, pos=0.5] {\color{black} \scriptsize $C\!=0 \!$} (S);
        \end{tikzpicture}
    \caption{$\G^ \mathcal{L}$ with no surrogates}
    \label{fig: exmb}
	\end{subfigure}\hspace{0cm}
    \begin{subfigure}{0.24\textwidth}
	    \centering
	   \begin{tikzpicture}[scale=0.6, every node/.style={scale=0.6}]
            \tikzset{edge/.style = {->,> = latex'}}
    	\node [block, label=center:$X$](X) {};
            \node [block, right=  0.5cm of X, label=center:$T$](T) {};
            \node [block, above=  0.7cm of X, label=center:$S$](S) {}; 
            \node [block, right=  1cm of S, label=center:$W$](W) {};
            \node [block, right=  0.5cm of W, label=center:$Y$](Y) {};
            \node [block, dashed, left = 0.5cm of S, label=center:$U_1$](U1) {};
            \node [block, above right = 0.6cm and 0.5cm of S, label=center:$C$](C) {};
            \node [block, dashed, above right = 0.6cm and 0.2cm of W, label=center:$U_2$](U2) {};
            \node [block, dashed, right = 1cm of T, label=center:$U_3$](U3) {};
            
            \cpath (T) edge[style = {->}] (X);
            \cpath (X) edge[style = {->}] (S);
            \cpath (S) edge (W);
            \cpath (W) edge[style = {->}] (Y);
            \cpath (U1) edge[style = {->}, dashed] (S);
            \cpath (U1) edge[dashed, bend right=30] (X);
            \cpath (C) edge[style = {->}] (W);
            \cpath (C) edge[style = {->}] (S);
            \cpath (U2) edge[style = {->}, dashed] (W);
            \cpath (U2) edge[style = {->}, dashed] (Y);
            \cpath (U3) edge[style = {->}, dashed] (Y);
        \end{tikzpicture}
    \caption{$\G^\mathcal{L}(C=0)$}
    \label{fig: exmc}
	\end{subfigure}\hspace{0cm}
    \begin{subfigure}{0.24\textwidth}
	    \centering
	   \begin{tikzpicture}[scale=0.6, every node/.style={scale=0.6}]
            \tikzset{edge/.style = {->,> = latex'}}
    	\node [block, label=center:$X$](X) {};
            \node [block, right=  0.5cm of X, label=center:$T$](T) {};
            \node [block, above=  0.7cm of X, label=center:$S$](S) {}; 
            \node [block, right=  1cm of S, label=center:$W$](W) {};
            \node [block, right=  0.5cm of W, label=center:$Y$](Y) {};
            \node [block, dashed, left = 0.5cm of S, label=center:$U_1$](U1) {};
            \node [block, above right = 0.6cm and 0.5cm of S, label=center:$C$](C) {};
            \node [block, dashed, above right = 0.6cm and 0.2cm of W, label=center:$U_2$](U2) {};
            \node [block, dashed, right = 1cm of T, label=center:$U_3$](U3) {};
            
            \cpath (T) edge[style = {->}] (X);
            \cpath (X) edge[style = {->}] (S);
            \cpath (W) edge[style = {->}] (Y);
            \cpath (U1) edge[style = {->}, dashed] (S);
            \cpath (U1) edge[dashed, bend right=30] (X);
            \cpath (C) edge[style = {->}] (W);
            \cpath (C) edge[style = {->}] (S);
            \cpath (U2) edge[style = {->}, dashed] (W);
            \cpath (U2) edge[style = {->}, dashed] (Y);
            \cpath (U3) edge[style = {->}, dashed] (Y);
            \cpath (U3) edge[dashed] (S);
        \end{tikzpicture}
    \caption{$\G^\mathcal{L}(C=1)$}
    \label{fig: exmd}
	\end{subfigure}\hspace{0cm}
 
    \caption{Two examples of leveraging CSI relations to achieve imitability.}
    \label{fig: examcon}
\end{figure}
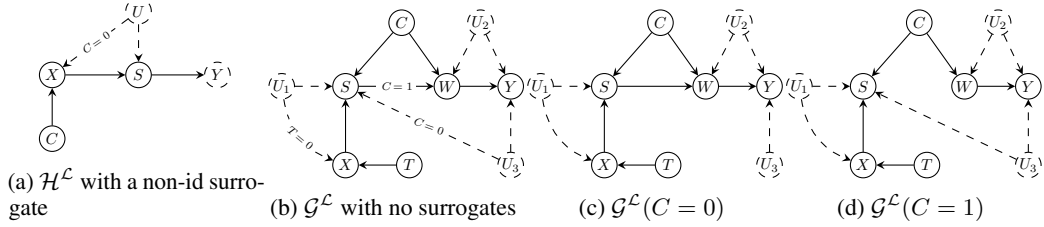
\begin{definition}[Context-specific surrogate]
    Given an LDAG $\G^\mathcal{L}$, a policy space $\Pi$, and a context $\mathbf{c}$, a set of variables $\mathbf{S} \subseteq \mathbf{O}$ is called context-specific surrogate w.r.t. $\langle \G^\mathcal{L}(\mathbf{c}) , \Pi \rangle$ if $X \perp Y | \mathbf{S}, \mathbf{C}$ in $\G^ \mathcal{L} (\mathbf{c})  \cup \Pi$ where $\G^ \mathcal{L} (\mathbf{c}) \cup \Pi$ is a supergraph of $\G^ \mathcal{L}(\mathbf{c}) $ by adding edges from $\mathbf{Pa}^\Pi$ to $X$.
\end{definition}

Consider the LDAG of Figure \ref{fig: exmb} for visualization.
The context-specific DAGs corresponding to contexts $C=0$ and $C=1$ are shown in Figures \ref{fig: exmc} and \ref{fig: exmd}, respectively.
$S$ is a context-specific surrogate with respect to $\langle\G^\mathcal{L}(C=0),\Pi\rangle$,
while no context-specific surrogates exist with respect to $\langle\G^\mathcal{L}(C=1),\Pi\rangle$.
The following result indicates that despite the absence of a general imitation surrogate, the existence of context-specific surrogates could suffice for imitation.

\begin{restatable}{proposition}{prpcontext}\label{proposition: context}
    Given an LDAG $\G^\mathcal{L}$ and a policy space $\Pi$, let $\mathbf{C}$ be a subset of $\mathbf{Pa}^\Pi\cap\mathbf{C}(\mathcal{L})$.
    If for every realization $\mathbf{c} \in \mathcal{D}_\mathbf{C}$ at least one of the following holds, then $P(y)$ is imitable w.r.t. $\langle\G^\mathcal{L},\Pi,P(\mathbf{O})\rangle$.
    \begin{itemize}
        \item There exists a subset $\mathbf{S}_\mathbf{c}$ such that $X\perp Y\vert \mathbf{S}_\mathbf{c},\mathbf{C}$ in $\G^\mathcal{L}(\mathbf{c})\cup\Pi$, and $P(\mathbf{S}_\mathbf{c}\vert do(\pi),\mathbf{C}=\mathbf{c})=P(\mathbf{S}_\mathbf{c}\vert \mathbf{C}=\mathbf{c})$ has a solution $\pi_\mathbf{c}\in\Pi$.
        \item There exists $\mathbf{Z}_c\subseteq\mathbf{Pa}^\Pi$ such that $X\perp Y\vert \mathbf{Z}_\mathbf{c},\mathbf{C}$ in $\G^\mathcal{L}(\mathbf{c})_{\underline{X}}$.
    \end{itemize}
\end{restatable}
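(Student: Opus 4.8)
The plan is to assemble a single policy $\pi^*\in\Pi$ out of the per-context policies supplied by the hypothesis and to verify that it imitates $P(y)$ by a decomposition over $\mathcal{D}_\mathbf{C}$. For each $\mathbf{c}\in\mathcal{D}_\mathbf{C}$ I would fix $\pi_\mathbf{c}$ as follows: if the second bullet holds, take the context-specific backdoor policy $\pi_\mathbf{c}(x\mid\mathbf{pa}^\Pi)=P\big(x\mid(\mathbf{pa}^\Pi)_{\mathbf{Z}_\mathbf{c}},\mathbf{C}=\mathbf{c}\big)$; otherwise take $\pi_\mathbf{c}$ to be a solution of $P(\mathbf{S}_\mathbf{c}\mid do(\pi),\mathbf{C}=\mathbf{c})=P(\mathbf{S}_\mathbf{c}\mid\mathbf{C}=\mathbf{c})$, where, as in the discussion preceding the statement, the left-hand side is understood to be identified in $\G^\mathcal{L}(\mathbf{c})$ so that $\pi_\mathbf{c}$ is a functional of $P(\mathbf{O})$ alone. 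Then set
\[
\pi^*(x\mid\mathbf{pa}^\Pi)=\sum_{\mathbf{c}\in\mathcal{D}_\mathbf{C}}\mathbbm{1}\{(\mathbf{pa}^\Pi)_\mathbf{C}=\mathbf{c}\}\,\pi_\mathbf{c}(x\mid\mathbf{pa}^\Pi),
\]
which is a legitimate element of $\Pi$ and computable from $P(\mathbf{O})$ because $\mathbf{C}\subseteq\mathbf{Pa}^\Pi$.

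Next I would reduce the goal to a per-context identity. Since $\mathbf{C}\subseteq\mathbf{Pa}^\Pi\subseteq\mathbf{O}\setminus\mathbf{De}(X)$, the variables of $\mathbf{C}$ are non-descendants of $X$, so a soft intervention on $X$ leaves their law intact: $P(\mathbf{C}=\mathbf{c}\mid do(\pi^*))=P(\mathbf{C}=\mathbf{c})$ for every $\mathbf{c}$. Moreover, expanding $P(\cdot\mid do(\pi^*))$ via Equation~\eqref{eq:softpi} and using $\mathbf{C}\subseteq\mathbf{Pa}^\Pi$ shows that the terms with $(\mathbf{pa}^\Pi)_\mathbf{C}\neq\mathbf{c}$ drop out upon conditioning on $\{\mathbf{C}=\mathbf{c}\}$, while on the surviving terms $\pi^*$ coincides with $\pi_\mathbf{c}$; hence $P(y\mid do(\pi^*),\mathbf{C}=\mathbf{c})=P(y\mid do(\pi_\mathbf{c}),\mathbf{C}=\mathbf{c})$. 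Combining this with the law of total probability gives $P(y\mid do(\pi^*))=\sum_{\mathbf{c}}P(y\mid do(\pi_\mathbf{c}),\mathbf{C}=\mathbf{c})\,P(\mathbf{C}=\mathbf{c})$ and likewise $P(y)=\sum_{\mathbf{c}}P(y\mid\mathbf{C}=\mathbf{c})\,P(\mathbf{C}=\mathbf{c})$, so it suffices to prove $P(y\mid do(\pi_\mathbf{c}),\mathbf{C}=\mathbf{c})=P(y\mid\mathbf{C}=\mathbf{c})$ for each $\mathbf{c}$ with $P(\mathbf{C}=\mathbf{c})>0$.

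For the per-context identity I would invoke the semantics of CSI: conditioned on $\{\mathbf{C}=\mathbf{c}\}$, both the observational distribution and the distribution under $do(\pi_\mathbf{c})$ of the non-context variables are Markov relative to the context-specific DAG $\G^\mathcal{L}(\mathbf{c})$ (this is exactly why the spurious edges are deleted in its definition). Under this reduction the two bullets become the two classical imitation mechanisms of \cite{zhang2020causal} carried out inside the sub-population $\{\mathbf{C}=\mathbf{c}\}$. In the backdoor case, $X\perp Y\mid\mathbf{Z}_\mathbf{c},\mathbf{C}$ in $\G^\mathcal{L}(\mathbf{c})_{\underline{X}}$ makes $\mathbf{Z}_\mathbf{c}$ a $\mathbf{c}$-relativized $\pi$-backdoor admissible set, and rerunning the argument recalled in Section~\ref{sec:imitation} with all distributions conditioned on $\{\mathbf{C}=\mathbf{c}\}$ yields $P(y\mid do(\pi_\mathbf{c}),\mathbf{C}=\mathbf{c})=\sum_{\mathbf{z}_\mathbf{c}}P(y\mid\mathbf{z}_\mathbf{c},\mathbf{C}=\mathbf{c})P(\mathbf{z}_\mathbf{c}\mid\mathbf{C}=\mathbf{c})=P(y\mid\mathbf{C}=\mathbf{c})$. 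In the surrogate case, $X\perp Y\mid\mathbf{S}_\mathbf{c},\mathbf{C}$ in $\G^\mathcal{L}(\mathbf{c})\cup\Pi$ implies, by the surrogate argument of \cite{zhang2020causal} relativized to $\{\mathbf{C}=\mathbf{c}\}$ and to $\G^\mathcal{L}(\mathbf{c})$, that $P(y\mid\mathbf{s},\mathbf{C}=\mathbf{c})$ is invariant under every soft intervention on $X$, so $P(y\mid do(\pi),\mathbf{C}=\mathbf{c})=\sum_{\mathbf{s}}P(y\mid\mathbf{s},\mathbf{C}=\mathbf{c})P(\mathbf{s}\mid do(\pi),\mathbf{C}=\mathbf{c})$ for every $\pi$; specializing to $\pi=\pi_\mathbf{c}$ and using its defining property $P(\mathbf{S}_\mathbf{c}\mid do(\pi_\mathbf{c}),\mathbf{C}=\mathbf{c})=P(\mathbf{S}_\mathbf{c}\mid\mathbf{C}=\mathbf{c})$ collapses the right-hand side to $P(y\mid\mathbf{C}=\mathbf{c})$. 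Substituting back gives $P(y\mid do(\pi^*))=P(y)$; and since every formula used is a fixed functional of $P(\mathbf{O})$ and every d-separation holds in all models of $\M_{\langle\G^\mathcal{L}\rangle}$, the same $\pi^*$ works for every $M$ with $P^M(\mathbf{O})=P(\mathbf{O})$, establishing imitability w.r.t. $\langle\G^\mathcal{L},\Pi,P(\mathbf{O})\rangle$.

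The step I expect to be the main obstacle is the CSI-restriction argument: rigorously justifying that conditioning on $\{\mathbf{C}=\mathbf{c}\}$ turns the LDAG into the DAG $\G^\mathcal{L}(\mathbf{c})$ not only for the observational law but also under $do(\pi_\mathbf{c})$, so that d-separations read off $\G^\mathcal{L}(\mathbf{c})$ (and off $\G^\mathcal{L}(\mathbf{c})\cup\Pi$ in the surrogate case) translate into genuine conditional independences, together with the bookkeeping identifying $do(\pi^*)$ restricted to $\{\mathbf{C}=\mathbf{c}\}$ with $do(\pi_\mathbf{c})$ restricted to $\{\mathbf{C}=\mathbf{c}\}$. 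Of the two bullets, the surrogate case is the more delicate, since it superimposes the context conditioning on the policy-augmented graph; the backdoor case is essentially a verbatim rerun of the classic argument within the sub-population.
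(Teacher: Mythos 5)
Your proposal is correct and follows essentially the same route as the paper's proof: the same per-context policies (backdoor policy $P(x\mid \mathbf{Z}_\mathbf{c},\mathbf{C}=\mathbf{c})$ or a solution of the surrogate matching equations), the same indicator-stitched policy $\pi^*$, the same decomposition of $P(y\mid do(\pi^*))$ over realizations of $\mathbf{C}$ using $\mathbf{C}\cap\mathbf{De}(X)=\emptyset$, and the same per-context arguments (invariance of $P(y\mid \mathbf{s},\mathbf{c})$ under interventions on $X$ via the d-separation in $\G^\mathcal{L}(\mathbf{c})\cup\Pi$ for the surrogate case, and the classic backdoor computation conditioned on $\mathbf{c}$ for the other). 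The delicate step you flag is handled in the paper exactly as you anticipate, by passing to the modified model $M_{\pi_\mathbf{c}}$ and reading the context-specific d-separations as conditional independences given $\mathbf{C}=\mathbf{c}$ in both $M$ and $M_{\pi_\mathbf{c}}$.
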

As a concrete example, defining $\mathbf{Z}_1=\emptyset$, $X\perp Y\vert \mathbf{Z}_1, C$ holds in the graph of Figure \ref{fig: exmd}.
Moreover, $\mathbf{S}_0=\{S\}$, is a context-specific surrogate w.r.t. $\langle \G^\mathcal{L}(C=0), \Pi \rangle$ as discussed above.
As a consequence of Proposition \ref{proposition: context}, $P(y)$ is imitable w.r.t. $\langle\G^\mathcal{L},\Pi,P(\mathbf{O})\rangle$, if a solution $\pi_0$ exists to the linear set of equations
\[
\begin{split}
    P(S\vert do(\pi), C=0)=P(S\vert C=0),
\end{split}\]
where $P(s\vert do(\pi), C=0)= \sum_{x,t}P(s\vert x,T=0, C=0)\pi(x\vert t, C=0)P(t\vert  C=0)$,  analogous to Equation \eqref{eq:sdopi}.

To sum up, accounting for CSIs has a two-fold benefit:
(a) \emph{context-specific surrogates} can be leveraged to render previously non-imitable instances imitable, and
(b) identification results can be derived for imitation surrogates that were previously non-identifiable.
\begin{wrapfigure}{r}{0.59\textwidth}
\begin{minipage}{0.59\textwidth}
\vspace{-.1cm}
\setlength{\textfloatsep}{8pt}
\begin{algorithm}[H]
\caption{Imitation w.r.t. $\langle \G^\mathcal{L}, \Pi, P(\mathbf{O}) \rangle$}
\label{alg: imitate2}
\begin{algorithmic}[1] 
\Function{ Imitate2 } {$\G^\mathcal{L}, \Pi, X, Y, P(\mathbf{O})$}
\If{\Call{SubImitate }{$\G^\mathcal{L}, \Pi, X, Y, \emptyset, \emptyset, P(\mathbf{O})$}}
     $\pi^*\gets$ \Call{SubImitate }{$\G^\mathcal{L}, \Pi, X, Y, \emptyset, \emptyset, P(\mathbf{O})$}
    \State \Return $\pi^*$
\Else \State\Return Fail\vspace{0.7em}
\EndIf 
\EndFunction
\setcounter{ALG@line}{0}
\hrulefill
\Function{SubImitate}{$\G^\mathcal{L}, \Pi, X, Y, \mathbf{C}, \mathbf{c}, P(\mathbf{O})$}
    \State Construct the context-specific DAG $\mathcal{G}^\mathcal{L}(\mathbf{c})$ 
    \If{\Call{FindSep } {$\mathcal{G}^\mathcal{L}(\mathbf{c}), \Pi, X, Y$}}
        \State $\mathbf{Z}_\mathbf{c}\gets$ \Call{FindSep } {$\mathcal{G}^\mathcal{L}(\mathbf{c}), \Pi, X, Y$}
        \State $\pi_\mathbf{c}(X\vert\mathbf{Pa}^\Pi)\gets P(X\vert \mathbf{Z}_\mathbf{c},\mathbf{C}=\mathbf{c})$
        \State\Return $\pi_\mathbf{c}$
    \ElsIf{\Call{FindMinSep} {$\mathcal{G}^\mathcal{L}(\mathbf{c})\!\cup\!\Pi, X, Y, \mathbf{C}$}}
        \State $\mathbf{S}_\mathbf{c}\gets$ \Call{FindMinSep } {$\mathcal{G}^\mathcal{L}(\mathbf{c})\cup\Pi, X, Y,\mathbf{C}$}$\setminus \mathbf{C}$
        \If{\Call{Csi-Id }{$P(\mathbf{S}_\mathbf{c}\vert do(x),\mathbf{C}=\mathbf{c})$}}
            \State Solve $P(\mathbf{S}_\mathbf{c}\vert do(\pi_\mathbf{c}),\mathbf{c}) \! \!= \! \!P(\mathbf{S}_\mathbf{c}\vert\mathbf{c})$ for $\pi_\mathbf{c} \!\! \in \!  \Pi$
            \If{such $\pi_\mathbf{c}$ exists}
                \State\Return $\pi_\mathbf{c}$
            \EndIf
        \EndIf
    \Else
        \If{$\mathbf{C}=\mathbf{C}(\mathcal{L})\cap\mathbf{Pa}^\Pi$}
            \State\Return FAIL
        \Else
            \State Choose $V\in\mathbf{C}(\mathcal{L})\cap\mathbf{Pa}^\Pi\setminus\mathbf{C}$ arbitrarily
            \For{$v\in\mathcal{D}_V$}
                \State $\mathbf{C}'\gets\mathbf{C}\cup\{V\},\;\mathbf{c}'\gets\mathbf{c}\cup\{v\}$
               \If{\Call{SI}{$\G^\mathcal{L}, \Pi, X, Y, \mathbf{C}', \mathbf{c}', P(\mathbf{O})$}}
                    \State $\pi_{\mathbf{c},v} \gets $\Call{SI}{$\G^\mathcal{L}\!, \Pi, X, Y, \mathbf{C}'\!, \mathbf{c}'\!, P(\mathbf{O})$}
                \Else \State\Return Fail
                \EndIf 
            \EndFor
            \State $\pi_\mathbf{c}(X\vert\mathbf{Pa}^\Pi)\gets\sum_{v\in\mathcal{D}_V}\mathbbm{1}_{\{(\mathbf{Pa}^\Pi)_V=v\}}\pi_{\mathbf{c},v}$
            \State \Return $\pi_\mathbf{c}$
        \EndIf
    \EndIf
\EndFunction
\end{algorithmic}
\end{algorithm}
\vspace{-.9cm}
\end{minipage}
\end{wrapfigure}
\normalsize
In light of Proposition \ref{proposition: context}, an algorithmic approach for causal imitation learning is proposed, summarized as Algorithm \ref{alg: imitate2}.
This algorithm calls a recursive subroutine, $SubImitate$, also called $SI$ within the pseudo-code.
It is noteworthy that Proposition \ref{proposition: context} guarantees imitability if the two conditions are met for any arbitrary subset of $\mathbf{C}(\mathcal{L})\cap\mathbf{Pa}^\Pi$.
As we shall see, Algorithm \ref{alg: imitate2} utilizes a recursive approach for building such a subset so as to circumvent the need to test all of the possibly exponentially many subsets.

The subroutine $SI$ is initiated with an empty set ($\mathbf{C}=\emptyset$) as the considered context variables at the first iteration.
At each iteration, the realizations of $\mathbf{C}$ are treated separately.
For each such realization $\mathbf{c}$, if the second condition of Proposition \ref{proposition: context} is met through a set $\mathbf{Z}_\mathbf{c}$, then $P(X\vert\mathbf{Z}_\mathbf{c}, \mathbf{C}=\mathbf{c})$
is returned as the context-specific imitating policy (lines 3-6).
Otherwise, the search for a context-specific surrogate begins.
We utilize the $FindMinSep$ algorithm of \cite{van2014constructing}
to identify a minimal separating set $\mathbf{S}_\mathbf{c}\cup\mathbf{C}$ for $X$ and $Y$, among those that necessarily include $\mathbf{C}$ (lines 7-8).
We then use the identification algorithm of \cite{tikka2019identifying} under CSI relations to identify the effect of an arbitrary policy on $\mathbf{S}_\mathbf{c}$, conditioned on the context $\mathbf{c}$.
This algorithm is built upon CSI-calculus, which subsumes do-calculus\footnote{Figure \ref{fig: exma} is an example where do-calculus fails to identify $P(s\vert do(x))$, whereas CSI-calculus provides an identification formula, Equation \eqref{eq:sdopi}.}.
Next,
if the linear system of equations $P(\mathbf{S}_\mathbf{c}\vert do(\pi_\mathbf{c}),\mathbf{c})=P(\mathbf{S}_\mathbf{c}\vert, \mathbf{c})$ has a solution, then this solution is returned as the optimal policy (lines 9-12).
Otherwise, an arbitrary variable $V\in\mathbf{C}(\mathcal{L})\cap\mathbf{Pa}^\Pi\setminus\mathbf{C}$ is added to the considered context variables, and the search for context-specific policies proceeds while taking the realizations of $V$ into account (lines 17-24).
If no variables are left to add to the set of context variables (i.e., $\mathbf{C}=\mathbf{C}(\mathcal{L})\cap\mathbf{Pa}^\Pi$) and neither of the conditions of Proposition \ref{proposition: context} are met for a realization of $\mathbf{C}$, then the algorithm stops with a failure (lines 14-15).
Otherwise, an imitating policy $\pi^*$ is returned.
We finally note that if computational costs matter, the CSI-ID function of line 9 can be replaced by the ID algorithm of \cite{shpitser2006identification}.
Further, the minimal separating sets of line 8 might not be unique, in which case all such sets can be used.

\begin{theorem}
    Given an LDAG $\G^\mathcal{L}$, a policy space $\Pi$ and observational distribution $P(\mathbf{O})$, if Algorithm \ref{alg: imitate2} returns a policy $\pi^*$, then $\pi^*$ is an optimal imitating policy for $P(y)$ w.r.t. $\langle\G^\mathcal{L},\Pi,P(\mathbf{O})\rangle$.
    That is, Algorithm \ref{alg: imitate2} is sound.
\end{theorem}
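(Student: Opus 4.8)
The plan is to establish soundness of Algorithm~\ref{alg: imitate2} by induction on the recursion depth of $SubImitate$, leaning on Proposition~\ref{proposition: context} as the workhorse. The central claim to prove is the following invariant: whenever $SubImitate(\G^\mathcal{L}, \Pi, X, Y, \mathbf{C}, \mathbf{c}, P(\mathbf{O}))$ returns a policy $\pi_\mathbf{c}$, that policy satisfies the context-specific imitation guarantee $P^M(y \mid do(\pi_\mathbf{c}), \mathbf{C}=\mathbf{c}) = P^M(y \mid \mathbf{C}=\mathbf{c})$ for every $M \in \mathcal{M}_{\langle \G^\mathcal{L}\rangle}$ consistent with $P(\mathbf{O})$ (and that $\pi_\mathbf{c}$ is a function of $P(\mathbf{O})$ alone). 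Once this invariant is in hand, the top-level call with $\mathbf{C}=\emptyset$ yields $P^M(y \mid do(\pi^*)) = P^M(y)$, which is exactly the assertion that $P(y)$ is imitable w.r.t. $\langle \G^\mathcal{L}, \Pi, P(\mathbf{O})\rangle$ with witness $\pi^*$.

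\textbf{Base cases.} The recursion bottoms out in three ways. (i) If $FindSep$ succeeds on $\G^\mathcal{L}(\mathbf{c})$ (lines 3--6), then $\mathbf{Z}_\mathbf{c}$ is a $\pi$-backdoor-type separator and the second bullet of Proposition~\ref{proposition: context} applies directly, so $\pi_\mathbf{c}(X\mid \mathbf{Pa}^\Pi) = P(X\mid \mathbf{Z}_\mathbf{c}, \mathbf{C}=\mathbf{c})$ is a valid context-specific imitating policy by the same backdoor argument as in the classic case, now applied inside the context-specific DAG. (ii) If instead a context-specific surrogate $\mathbf{S}_\mathbf{c}$ is found, $CSI\text{-}Id$ identifies $P(\mathbf{S}_\mathbf{c}\mid do(x), \mathbf{C}=\mathbf{c})$ from $P(\mathbf{O})$ (soundness of the Tikka--Karvanen algorithm, invoked as a black box), and the linear system $P(\mathbf{S}_\mathbf{c}\mid do(\pi_\mathbf{c}),\mathbf{c}) = P(\mathbf{S}_\mathbf{c}\mid \mathbf{c})$ is solved; here I invoke the first bullet of Proposition~\ref{proposition: context} together with the surrogate property $X \perp Y \mid \mathbf{S}_\mathbf{c}, \mathbf{C}$ in $\G^\mathcal{L}(\mathbf{c}) \cup \Pi$ (guaranteed because $FindMinSep$ returns a separator containing $\mathbf{C}$) to conclude that matching the surrogate distribution matches the reward distribution in context $\mathbf{c}$. (iii) If neither condition is met and $\mathbf{C} = \mathbf{C}(\mathcal{L})\cap \mathbf{Pa}^\Pi$, the branch returns FAIL and there is nothing to prove.

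\textbf{Inductive step.} The remaining branch (lines 17--24) picks an unused context variable $V$, recurses on each extension $(\mathbf{C}', \mathbf{c}') = (\mathbf{C}\cup\{V\}, \mathbf{c}\cup\{v\})$, and — if all recursive calls succeed — assembles $\pi_\mathbf{c}(X\mid \mathbf{Pa}^\Pi) = \sum_{v\in\mathcal{D}_V} \mathbbm{1}_{\{(\mathbf{Pa}^\Pi)_V = v\}}\,\pi_{\mathbf{c},v}$. By the induction hypothesis each $\pi_{\mathbf{c},v}$ satisfies the invariant for context $\mathbf{c}\cup\{v\}$; I then need to show the piecewise combination satisfies it for context $\mathbf{c}$. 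The key computation is that conditioning on $\mathbf{C}=\mathbf{c}$ and then partitioning on $V=v$, the policy $do(\pi_\mathbf{c})$ restricted to the event $V=v$ acts exactly as $do(\pi_{\mathbf{c},v})$, because $V \in \mathbf{Pa}^\Pi$ and $\pi_\mathbf{c}$ is defined to agree with $\pi_{\mathbf{c},v}$ on that event; hence $P^M(y\mid do(\pi_\mathbf{c}), \mathbf{c}) = \sum_v P^M(y\mid do(\pi_{\mathbf{c},v}), \mathbf{c}, v)\,P^M(v\mid \mathbf{c}) = \sum_v P^M(y\mid \mathbf{c}, v)P^M(v\mid \mathbf{c}) = P^M(y\mid \mathbf{c})$, using the induction hypothesis in the middle equality. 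This is essentially the same splicing argument used in the constructive proof of Proposition~\ref{prop: forallc}, and appealing to that proof streamlines this step.

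\textbf{Main obstacle.} The delicate point — and the step I expect to require the most care — is verifying that the policy-splicing identity $P^M(y\mid do(\pi_\mathbf{c}), \mathbf{C}=\mathbf{c}) = \sum_{v} P^M(y\mid do(\pi_{\mathbf{c},v}), \mathbf{C}=\mathbf{c}, V=v)\,P^M(V=v\mid \mathbf{C}=\mathbf{c})$ is actually valid when $\pi_\mathbf{c}$ is a data-dependent stochastic intervention and $V$ is itself among the policy's inputs $\mathbf{Pa}^\Pi$. One must check that intervening with the mixture policy and then conditioning on $V=v$ genuinely reproduces the submodel under $do(\pi_{\mathbf{c},v})$ — this relies on $V \notin \mathbf{De}(X)$ (so $V$'s distribution is unaffected by $do(\pi)$, via the third rule of do-calculus as used in Equation~\eqref{eq:softpi}) and on the indicator structure of the mixture. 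A secondary subtlety is confirming that the context-specific DAG $\G^\mathcal{L}(\mathbf{c})$ and the supergraph $\G^\mathcal{L}(\mathbf{c})\cup\Pi$ correctly capture the conditional model given $\mathbf{C}=\mathbf{c}$, so that d-separation statements in those graphs translate to genuine conditional independences in $P^M(\cdot\mid \mathbf{C}=\mathbf{c})$ — this is where the definition of spurious (context-absent) edges must be used carefully, and where the restriction $\mathbf{C}\subseteq \mathbf{Pa}^\Pi\cap\mathbf{C}(\mathcal{L})$ matters. Apart from those two points, the argument is a routine induction assembling the two bullets of Proposition~\ref{proposition: context} and the soundness of the imported identification subroutines.
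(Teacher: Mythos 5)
Your proposal is correct, and it follows the route the paper intends: Proposition \ref{proposition: context} supplies the per-context guarantees, and the final policy is assembled by the indicator-mixture splicing identity that appears verbatim at the end of the paper's proofs of Propositions \ref{prop: forallc} and \ref{proposition: context} (where $P^M(\mathbf{c}\vert do(\pi^*))=P^M(\mathbf{c})$ is justified by $\mathbf{C}\subseteq\mathbf{Pa}^\Pi\subseteq\mathbf{O}\setminus\mathbf{De}(X)$, exactly the point you flag as the main obstacle). The paper does not actually print a standalone proof of this theorem; it leaves it as an immediate consequence of Proposition \ref{proposition: context}. Your induction on the recursion depth is a genuinely useful refinement of that implicit argument, because the statement of Proposition \ref{proposition: context} fixes a single subset $\mathbf{C}$ and quantifies over all its realizations, whereas \textsc{SubImitate} may terminate different branches at different depths (and, as written, may even choose different variables $V$ in different branches), so the policy is ultimately assembled over a tree-structured partition of the context space rather than over $\mathcal{D}_{\mathbf{C}}$ for one fixed $\mathbf{C}$. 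Your invariant-per-recursion-node formulation handles this cleanly; the alternative, staying strictly within the proposition as stated, would require observing that a separator or surrogate found for a context $\mathbf{c}$ remains valid for every refinement $\mathbf{c}'$ of $\mathbf{c}$ (since $\G^\mathcal{L}(\mathbf{c}')$ is a subgraph of $\G^\mathcal{L}(\mathbf{c})$) and then pushing every branch down to the full set $\mathbf{C}(\mathcal{L})\cap\mathbf{Pa}^\Pi$. Either way the content is the same; your version is the more direct formalization of what the algorithm actually does.
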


\vspace{-.4cm}
\section{Experiments}\label{sec:exp}
\vspace{-.3cm}

\begin{table*}[ht]
    \caption{Results pertaining to the model of Figure \ref{fig: exma}.}
        \centering
        \begin{tabular}{ c| c c c c}
            \toprule
              \multirow{2}{*}{Metric}&  \multicolumn{4}{c}{Algorithm} \\
              \cline{2-5}
             & Expert & Naive 1 & Naive 2 & Algorithm \ref{alg: imitate2} \\
            \hline
             $\mathbbm{E}[Y]$ & $1.367$  & $1.194$  & $1.193$  & $1.358$ \\
             $D_{KL}(P(Y)|| P(Y|do(\pi_{ALG}))$     & $0$  & $0.0217$  & $0.0219$  & $0.0007$ \\
             $D_{KL}(\pi_{ALG}(X\vert T=0)|| \hat{\pi}_{ALG}(X\vert T=0))$   & $NA$  & $2.3 \times 10^{-5}$  & $4.4 \times 10^{-6}$  & $1.3\times 10^{-3}$\\
             $D_{KL}(\pi_{ALG}(X\vert T=1)|| \hat{\pi}_{ALG}(X\vert T=1))$   & $NA$  & $2.3 \times 10^{-5}$  & $4.8 \times 10^{-4}$  & $1.3\times 10^{-3}$\\
            \bottomrule
        \end{tabular}
        \label{tab: }
    \end{table*}


Our experimental evaluation is organized into two parts.
In the first part, we address the decision problem pertaining to imitability.
We evaluate the gain resulting from accounting for CSIs in rendering previously non-imitable instances imitable.
In particular, we assess the classic imitability v.s. imitability under CSIs for randomly generated graphs.
In the second part, we compare the performance of Alg.~\ref{alg: imitate2} against baseline algorithms on synthetic datasets (see Sec. \ref{apx:exp} for further details of our experimental setup).


\subsection{Evaluating Imitability}

\begin{wrapfigure}{r}{0.4\textwidth}
\begin{minipage}{0.4\textwidth}
\vspace{-1cm}
\begin{figure}[H] 
        \centering
        \captionsetup{justification=centering}
            \includegraphics[width=0.9\textwidth]{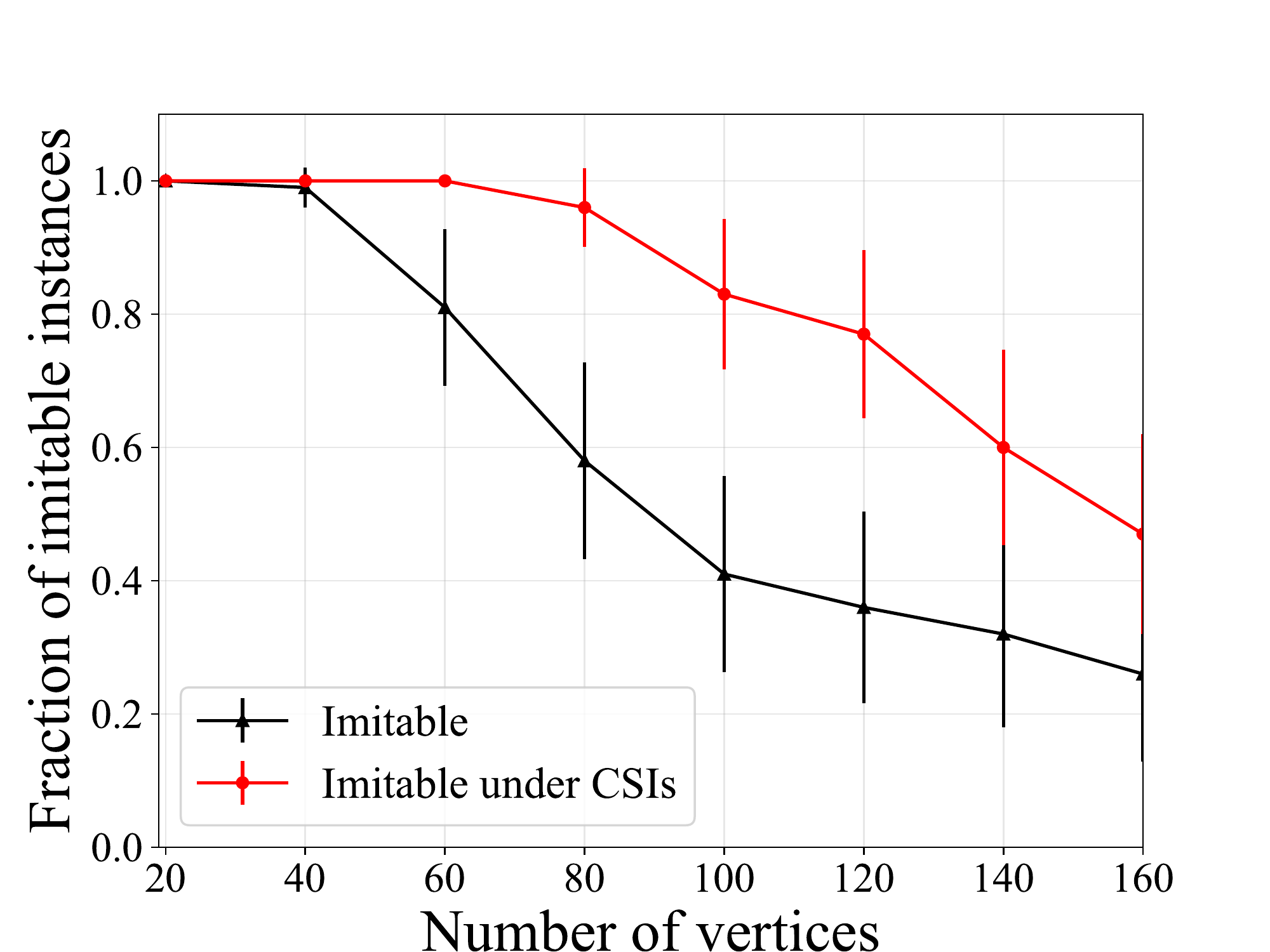}
        \caption{The fraction of imitable instances (in the classical sense) vs those that are imitable considering CSIs.}
        \label{fig: frac}
\end{figure}
\vspace{-0.8cm}
\end{minipage}
\end{wrapfigure}

We sampled random graphs with $n$ vertices and maximum degree $\Delta=\frac{n}{10}$ uniformly at random.
Each variable was assumed to be latent with probability $\frac{1}{6}$. We chose 3 random context variables such that the graphical constraint $\mathbf{Pa}(\mathbf{C}(\mathcal{L}))\subseteq\mathbf{C}(\mathcal{L})$ is satisfied.
Labels on the edges were sampled with probability $0.5$.
We then evaluated the graphical criterion of classic imitability (existence of $\pi$-backdoor) and imitability with CSIs (Corollary \ref{cor:1}).
The results are depicted in Figure \ref{fig: frac}, where each point in the plot is an average of 100 sampled graphs.
As seen in this figure, taking into account only a handful of CSI relations (in particular, 3 context variables among hundreds of variables) could significantly increase the fraction of imitable instances.
\vspace{-.1cm}
\subsection{Performance Evaluation}\label{sec:exp2}
In this section, we considered the graph of Figure \ref{fig: exmb} as a generalization of the economic model of Figure \ref{fig: exma}, where the reward variable $Y$ can be a more complex function.
As discussed earlier, this graph has neither a $\pi$-backdoor admissible set nor an imitation surrogate.
However, given the identifiability of $P(S\vert do(\pi),C=0)$, Algorithm \ref{alg: imitate2} can achieve an optimal policy.
We compared the performance of the policy returned by Algorithm \ref{alg: imitate2} against two baseline algorithms:
Naive algorithm 1, which mimics only the observed distribution of the action variable (by choosing $\pi(X)=P(X)$),
and Naive algorithm 2, which takes the causal ancestors of $X$ into account, designing the policy $\pi(X|T)=P(X|T)$.
Naive algorithm 2 can be thought of as a feature selection followed by a behavior cloning approach.
The goal of this experiment was to demonstrate the infeasibility of imitation learning without taking CSIs into account.
A model with binary observable variables and a ternary reward was generated
.
As can be seen in Table \ref{tab: }, Algorithm \ref{alg: imitate2} was able to match the expert policy both in expected reward and KL divergence of the reward distribution.
The naive algorithms, on the other hand, failed to get close to the reward distribution of the expert.
Since the algorithms were fed with finite observational samples, 
the KL divergence of the estimated policies with the nominal policy  are also reported.
Notably, based on the reported measures, the undesirable performance of the Naive algorithms does not stem from estimation errors.

\vspace{-.1cm}
\section{Conclusion}\label{sec:conc}
\vspace{-.1cm}
We considered the causal imitation learning problem when accounting for context-specific independence relations.
We proved that in contrast to the classic problem, which is equivalent to a d-separation, the decision problem of imitability under CSIs is NP-hard.
We established a link between these two problems.
In particular, we proved that imitability under CSIs is equivalent to several instances of the classic imitability problem for a certain class of context variables.
We showed that utilizing the overlooked notion of CSIs could be a worthwhile tool in causal imitation learning as an example of a fundamental AI problem from a causal perspective.
We note that while taking a few CSI relations into account could result in significant achievable results, the theory of CSIs is not yet well-developed.
In particular, there exists no complete algorithm for causal identification under CSIs.
Further research on the theory of CSI relations could yield considerable benefits in various domains where such relations are present.

\bibliographystyle{plain}
\bibliography{bibliography.bib}
\appendix
\onecolumn

\section{Technical Proofs}\label{apx:proofs}

\lemmaZZ*

\begin{proof}
    We first claim that for any set $\mathbf{W}\subseteq \{X, Y\}$, the set of ancestors of $\mathbf{W}$ in $\mathcal{G}$ and $\mathcal{G}_{\underline{X}}$
    coincide, i.e., $ \mathbf{An}(\mathbf{W})_\G=An(\mathbf{W})_{\G_{\underline{X}}}$.
    First, note that since $\G_{\underline{X}}$ is a subgraph of $\mathbf{G}$, $An(\mathbf{W})_{\G_{\underline{X}}}\subseteq An(\mathbf{W})_\G$.
    For the other direction, let $T\in An(\mathbf{W})_\G$ be an arbitrary vertex.
    If $T\in An(X)$, then $T\in An(\mathbf{W})_{\G_{\underline{X}}}$.
    Otherwise, there exists a directed path from $T$ to $Y$ that does not pass through $X$.
    The same directed path exists in $\G_{\underline{X}}$, and as a result, $T\in An(\mathbf{W})_{\G_{\underline{X}}}$.
    Therefore, $An(\mathbf{W})_\G\subseteq An(\mathbf{W})_{\G_{\underline{X}}}$.

    The rest of the proof follows from an application of Lemma 3.4 of \cite{van2014constructing} in $\G_{\underline{X}}$, with $I=\emptyset$ and $R=\mathbf{Pa}^\Pi$.
\end{proof}

\lemmaonly*
\begin{proof}
    Assume the contrary, that there exists a subset of variables $\mathbf{W} \subseteq \mathbf{V} \setminus Y$ such that for an assignment $\mathbf{w}_0 \in \mathcal{D}_{\mathbf{W}}$, $P(y)$ is not imitable w.r.t. $\langle \G_{\mathbf{w}_0}^{\mathcal{L}_{\mathbf{w}_0}}, \Pi \rangle$.
    It suffices to show that $P(y)$ is not imitable w.r.t. $\langle \G^ \mathcal{L}, \Pi \rangle$.
    Since $P(y)$ is not imitable w.r.t. $\langle \G_{\mathbf{w}_0}^{\mathcal{L}_{\mathbf{w}_0}}, \Pi \rangle$, there exists an SCM $M' \in \mathcal{M}_{\langle \G_{\mathbf{w}_0}^{\mathcal{L}_{\mathbf{w}_0}} \rangle}$ such that $P^{M'}(y| do(\pi)) \neq P^{M'}(y)$ for every $\pi \in \Pi$.
    Define \begin{equation}\label{eq:epsilon}\epsilon \coloneqq \min_{\pi\in\Pi} |P^{M'}(y)-P^{M'}(y|do(\pi))|.\end{equation}
    Also, define \begin{equation}\label{eq:delta}\delta\coloneqq\min\{\frac{\epsilon}{4},\frac{1}{2}\}.\end{equation}
    Note that $\epsilon>0$, and $0<\delta<1$.
    Next, construct an SCM $M$ over $\mathbf{V}$ compatible with $\G_{\mathbf{w}_0}^{\mathcal{L}_{\mathbf{w}_0}}$ as follows.
    For variables $\mathbf{W}$, $P^M(\mathbf{W}=\mathbf{w}_0)=1-\delta$, and the rest is uniformly distributed over other realizations (summing up to $\delta$), such that $P^M(\mathbf{W}\neq\mathbf{w}_0)=\delta$.
    For variables $V_i\in\mathbf{V}\setminus\mathbf{W}$, $P^M(V_i\vert\mathbf{Pa}(V_i))=P^{M'}(V_i\vert\mathbf{Pa}(V_i))$, i.e., they follow the same law as the model $M'$.
    Note that by construction, $\mathbf{W}$ are isolated vertices in $\G_{\mathbf{w}_0}^{\mathcal{L}_{\mathbf{w}_0}}$,
    and therefore independent of every other variable in both $M'$ and $M$.
    Therefore, 
    \begin{equation}\label{eq:pr1}
    \begin{split}
        P^M(y)&=\sum_{\mathbf{w}\in\mathcal{D}_\mathbf{W}}P^M(y\vert\mathbf{w})P^M(\mathbf{w})
    =\sum_{\mathbf{w}\in\mathcal{D}_\mathbf{W}}P^{M'}(y\vert\mathbf{w})P^M(\mathbf{w})
    =\sum_{\mathbf{w}\in\mathcal{D}_\mathbf{W}}P^{M'}(y)P^M(\mathbf{w})\\&=P^{M'}(y).
    \end{split}\end{equation}
    Moreover, for any policy $\pi_1\in\Pi$ dependant on the values of $\mathbf{Z}'\subseteq \mathbf{Pa}^\Pi$, define a policy $\pi_2\in\Pi$ dependant on $\mathbf{Z}=\mathbf{Z}'\setminus\mathbf{W}$ as $\pi_2(X\vert \mathbf{Z})=(1-\delta)\pi_1(X\vert \mathbf{Z}, \mathbf{W}=\mathbf{w}_0)+\delta\pi_1(X\vert \mathbf{Z}, \mathbf{W}\neq\mathbf{w}_0)$.
    Note that $\mathbf{Z}\cap\mathbf{W}=\emptyset$.
    That is, $\pi_2$ is independent of the values of $\mathbf{W}$.
    We can write
    \begin{equation}\label{eq:pr2}
        \begin{split}
            P^{M}(y\vert do(\pi_1))&=\sum_{\mathbf{w}\in\mathcal{D}_\mathbf{W}}P^{M}(y\vert do(\pi_1),\mathbf{w})P^{M}(\mathbf{w}\vert do(\pi))\\
            &=P^{M}(y\vert do(\pi_1),\mathbf{W}=\mathbf{w}_0)(1-\delta)+P^{M}(y\vert do(\pi_1),\mathbf{W}\neq\mathbf{w}_0)\delta\\
            &=\sum_{x,\mathbf{z}}(1-\delta)P^{M}(y\vert do(x),\mathbf{z},\mathbf{W}=\mathbf{w}_0)\pi_1(x\vert \mathbf{z},\mathbf{W}=\mathbf{w}_0)P^M(\mathbf{z}\vert\mathbf{W}=\mathbf{w}_0)\\
            &+\sum_{x,\mathbf{z}}\delta P^{M}(y\vert do(x),\mathbf{z},\mathbf{W}\neq\mathbf{w}_0)\pi_1(x\vert \mathbf{z},\mathbf{W}\neq\mathbf{w}_0)P^M(\mathbf{z}\vert\mathbf{W}\neq\mathbf{w}_0)\\
            &=\sum_{x,\mathbf{z}}(1-\delta)P^{M'}(y\vert do(x),\mathbf{z})\pi_1(x\vert \mathbf{z},\mathbf{W}=\mathbf{w}_0)P^{M'}(\mathbf{z})\\
            &+\sum_{x,\mathbf{z}}\delta P^{M'}(y\vert do(x),\mathbf{z})\pi_1(x\vert \mathbf{z},\mathbf{W}\neq\mathbf{w}_0)P^{M'}(\mathbf{z})\\
            &=\sum_{x,\mathbf{z}}P^{M'}(y\vert do(x),\mathbf{z})\pi_2(x\vert \mathbf{z})P^{M'}(\mathbf{z})\\
            &=P^{M'}(y\vert do(\pi_2)).
        \end{split}
    \end{equation}
    That is, for any policy $\pi_1$ under model $M$, there is a policy $\pi_2$ that results in the same reward distribution under model $M'$.
    Therefore, combining Equations \eqref{eq:pr1} and \eqref{eq:pr2},
    \[\min_{\pi\in\Pi}|P^{M}(y)-P^{M}(y|do(\pi))| = \min_{\pi\in\Pi}|P^{M'}(y)-P^{M'}(y|do(\pi))|,\]
    although this minimum might occur under different policies.
    Recalling Equation \eqref{eq:epsilon}, we get that under Model $M$ and for any policy $\pi\in\Pi$,
    \begin{equation}\label{eq:modelM}
        \epsilon\leq |P^{M}(y)-P^{M}(y|do(\pi))|.
    \end{equation}
    Now we construct yet another SCM $M^\delta$ over $\mathbf{V}$ as follows.
    Variables $\mathbf{W}$ are distributed as in model $M$, i.e., $P^{M^\delta}(\mathbf{W})=P^{M}(\mathbf{W})$.
    Moreover, for each variable $V \in \mathbf{V} \setminus \mathbf{W}$ set $V= V^M$ if $\mathbf{Pa}(V) \cap \mathbf{W} = {(\mathbf{w}_0)}_{\mathbf{Pa}(V) \cap \mathbf{W}}$ where $V^M$ denotes the same variable $V$ under SCM $M$. 
    Otherwise, distribution of $V$ is uniform in $\mathcal{D}_V$.
    Note that by definition of $M^\delta$, the values of $\mathbf{W}$ are assigned independently, and the values of all other variables ($\mathbf{V}\setminus\mathbf{W}$) only depend on their parents, maintaining all the CSI relations.
    As a result, $M^\delta$ is compatible with $\G^\mathcal{L}$.
    Also, by construction,
    \begin{equation}
    \begin{split}
        &P^{M^\delta}(\mathbf{O}\setminus\mathbf{W}\vert \mathbf{W}=\mathbf{w}_0)=P^M(\mathbf{O}\setminus\mathbf{W}), \\ 
        &P^{M^\delta}(\mathbf{O}\setminus\mathbf{W}\vert do(x), \mathbf{W}=\mathbf{w}_0)=P^M(\mathbf{O}\setminus\mathbf{W}\vert do(x)).
    \end{split}
    \end{equation}
    Next, we write 
    \begin{equation*}
        \begin{aligned}
            P^{M^\delta}(y)&= \sum_{\mathbf{w} \in \mathcal{D}_{\mathbf{W}}}P^{M^\delta}(y| \mathbf{w}) P^{M^\delta}(\mathbf{w}) \\
            &=P^{M^\delta}(y|\mathbf{W} =\mathbf{w}_0)P^{M^\delta}(\mathbf{W}=\mathbf{w}_0)\\
            &+P^{M^\delta}(y|\mathbf{W} \neq \mathbf{w}_0)P^{M^\delta}(\mathbf{W} \neq \mathbf{w}_0)\\
            &=(1-\delta)P^{M^\delta}(y|\mathbf{W} =\mathbf{w}_0)
            +\delta P^{M^\delta}(y|\mathbf{W} \neq \mathbf{w}_0).\\
        \end{aligned}
    \end{equation*}
    The second term of the right-hand side above is a positive number not larger than $\delta$.
    Moreover, by construction of $M^\delta$, we have $P^{M^\delta}(y|\mathbf{W}=\mathbf{w}_0)= P^M(y)$.
    Therefore, we get 
    \begin{equation}\label{eq: pydif}
        (1-\delta)P^M(y) \leq P^{M^\delta}(y) \leq (1-\delta)P^M(y) + \delta.
    \end{equation}
    On the other hand, for an arbitrary policy $\pi\in\Pi$, depending on the values of $\mathbf{Z}\subseteq \mathbf{Pa}^\Pi$,
    \[
        \begin{split}
            P^{M^\delta}(y\vert do(\pi)) &= P^{M^\delta}(y\vert do(\pi),\mathbf{W}=\mathbf{w}_0)P^{M^\delta}(\mathbf{W}=\mathbf{w}_0\vert do(\pi))
            \\&+P^{M^\delta}(y\vert do(\pi),\mathbf{W}\neq\mathbf{w}_0)P^{M^\delta}(\mathbf{W}\neq\mathbf{w}_0\vert do(\pi))\\
            &=P^{M^\delta}(y\vert do(\pi),\mathbf{W}=\mathbf{w}_0)P^{M}(\mathbf{W}=\mathbf{w}_0)
            \\&+P^{M^\delta}(y\vert do(\pi),\mathbf{W}\neq\mathbf{w}_0)P^{M}(\mathbf{W}\neq\mathbf{w}_0)\\
            &=\sum_{x,\mathbf{z}}P^{M^\delta}(y\vert do(x),\mathbf{z},\mathbf{W}=\mathbf{w}_0)\pi(x\vert\mathbf{z},\mathbf{w}_0)P^{M^\delta}(\mathbf{z}\vert\mathbf{W}=\mathbf{w}_0)P^{M}(\mathbf{W}=\mathbf{w}_0)\\
            &+\delta P^{M^\delta}(y\vert do(\pi),\mathbf{W}\neq\mathbf{w}_0)\\
            &=\sum_{x,\mathbf{z}}P^{M}(y\vert do(x),\mathbf{z},\mathbf{W}=\mathbf{w}_0)\pi(x \vert\mathbf{z},\mathbf{w}_0)P^{M}(\mathbf{z}\vert\mathbf{W}=\mathbf{w}_0)P^{M}(\mathbf{W}=\mathbf{w}_0)\\
            &+\delta P^{M^\delta}(y\vert do(\pi),\mathbf{W}\neq\mathbf{w}_0)\\
            &=P^{M}(y\vert do(\pi),\mathbf{W}=\mathbf{w}_0)P^M(\mathbf{W}=\mathbf{w}_0)+\delta P^{M^\delta}(y\vert do(\pi),\mathbf{W}\neq\mathbf{w}_0)\\
            &=P^{M}(y\vert do(\pi),\mathbf{W}=\mathbf{w}_0)P^M(\mathbf{W}=\mathbf{w}_0)
            +P^{M}(y\vert do(\pi),\mathbf{W}\neq\mathbf{w}_0)P^M(\mathbf{W}\neq\mathbf{w}_0)\\
            &-P^{M}(y\vert do(\pi),\mathbf{W}\neq\mathbf{w}_0)P^M(\mathbf{W}\neq\mathbf{w}_0)
            +\delta P^{M^\delta}(y\vert do(\pi),\mathbf{W}\neq\mathbf{w}_0)\\
            &=P^M(y\vert do(\pi))-\delta(P^{M}(y\vert do(\pi),\mathbf{W}\neq\mathbf{w}_0)-P^{M^\delta}(y\vert do(\pi),\mathbf{W}\neq\mathbf{w}_0)),
        \end{split}
    \]
    and as a result,
    \begin{equation}\label{eq: pypidif}
        P^M(y\vert do(\pi)) - \delta \leq P^{M^\delta}(y\vert do(\pi))\leq P^M(y\vert do(\pi)) + \delta.
    \end{equation}
    
    Combining Equations \eqref{eq: pydif} and \eqref{eq: pypidif}, we get
    \begin{equation*}
        \begin{split}
            \big(P^M(y)-P^M(y|do(\pi))\big)-\delta P^M(y) &-\delta\\
            \leq P^{M^\delta}(y)-& P^{M^\delta}(y|do(\pi)) \leq \\
            &\big(P^M(y)-P^M(y|do(\pi))\big)-\delta P^M(y) +\delta,
        \end{split}
    \end{equation*}
    and consequently, 
    \[\big(P^M(y)-P^M(y|do(\pi))\big) -2\delta\leq P^{M^\delta}(y)- P^{M^\delta}(y|do(\pi)) \leq \big(P^M(y)-P^M(y|do(\pi))\big) +2\delta,\]
    which combined with Equation \eqref{eq:modelM} results in the following inequality:
    \[\vert P^{M^\delta}(y)- P^{M^\delta}(y|do(\pi))\vert\geq \epsilon-2\delta = \frac{\epsilon}{2}>0.\]
    To sum up, we showed that there exists model $M^\delta \in \mathcal{M}_{\langle \G^{\mathcal{L}} \rangle}$, such that for every $\pi \in \Pi$, $P^{M^\delta}(y|do(\pi)) \neq P^{M^\delta}(y)$ which is in contradiction with the imitability assumption. 
    Hence, $P(y)$ is imitable w.r.t. $\langle \G_{\mathbf{w}}^{\mathcal{L}_{\mathbf{w}} }, \Pi \rangle$ for every $\mathbf{w} \in \mathcal{D}_{\mathbf{W}}$ of any arbitrary subset of variables $\mathbf{W} \subseteq \mathbf{V} \setminus Y$.
\end{proof}

\theoremNP*

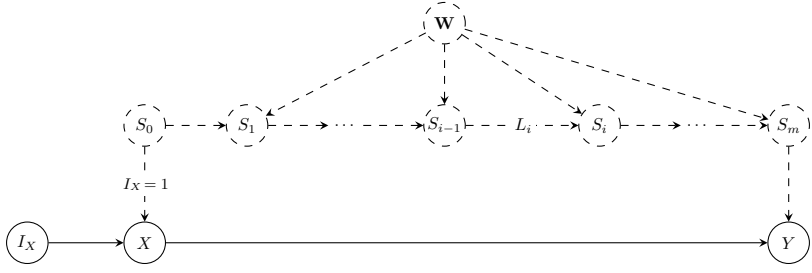
\begin{figure}[t] 
    \centering
    \tikzstyle{block} = [draw, fill=white, circle, text centered,inner sep= 0.3cm]
    \tikzstyle{input} = [coordinate]
    \tikzstyle{output} = [coordinate]
	\begin{tikzpicture}[scale=0.65, every node/.style={scale=0.65}]
    	\node [block, label=center:$X$](X) {};
            \node [block, right=  8cm of X, label=center:$Y$](Y) {};
            \node [block, left=  1 cm of X, label=center:$I_X$](ix) {};
            \node [block, dashed, above = 1cm of X, label=center:$S_0$](S0) {};
            \node [block, dashed, right = 0.8cm of S0, label=center:$S_1$](S1) {};
            \node [block, dashed, above = 1cm of Y, label=center:$S_m$](Sm) {};
            \node [right = 0.8cm of S1, label=center: $\dots$, inner sep= 0.35cm](dots1) {};
            \node [block, dashed,right = 0.8cm of dots1, label=center:$S_{i-1}$](Si-1) {};
            \node [block, dashed, right = 1.5cm of Si-1, label=center:$S_i$](Si) {};
            \node [right = 0.8cm of Si, label=center:$\dots$, inner sep= 0.35cm](dots2) {};
            \node [block, dashed, above = 0.8cm of Si-1, label=center:$\mathbf{W}$](W) {};
            
            \cpath (X) edge[style = {->}] (Y);
            \cpath (ix) edge[style = {->}] (X);
            \cpath (S0) [dashed]edge node[ fill=white, anchor=center, pos=0.5] {\color{black} \small $I_X\!=1 \!$} (X);
            \cpath (S0) edge[style = {->}, dashed] (S1);
            \cpath (Sm) edge[style = {->}, dashed] (Y);
            \cpath (S1) edge[style = {->}, dashed] (dots1);
            \cpath (dots1) edge[style = {->}, dashed] (Si-1);
             \cpath (Si-1) [dashed]edge node[ fill=white, anchor=center, pos=0.5] {\color{black} $L_i \!$} (Si);
            \cpath (Si) edge[style = {->}, dashed] (dots2);
            \cpath (dots2) edge[style = {->}, dashed] (Sm);
            \cpath (W) edge[style = {->}, dashed] (S1);
            \cpath (W) edge[style = {->}, dashed] (Si-1);
            \cpath (W) edge[style = {->}, dashed] (Si);
            \cpath (W) edge[style = {->}, dashed] (Sm);
        \end{tikzpicture}
    \caption{Reduction from 3-SAT to imitability.}
    \label{fig:proofhardness}
\end{figure}

\begin{proof}
    Our proof exploits ideas similar to the proof of hardness of causal identification under CSIs by \cite{tikka2019identifying}.
    We show a polynomial-time reduction from the 3-SAT problem, which is a well-known NP-hard problem \cite{karp1972reducibility}, to the imitability problem.
    Let an instance of 3-SAT be defined as follows.
    A formula $F$ in the conjunctive normal form with clauses $S_1,\dots,S_m$ is given, where each clause $S_i$ has 3 literals from the set of binary variables $W_1,\dots,W_k$ and their negations, $\neg W_1,\dots,\neg W_k$.
    The decision problem of 3-SAT is to determine whether there is a realization of the binary variables $W_1,\dots,W_k$ that \emph{satisfies} the given formula $F$, i.e., makes $F$ true.
    We reduce this problem to an instance of imitability problem with CSIs as follows.
    Define an LDAG with one vertex corresponding to each clause $S_i$ for $1\leq i\leq m$, one vertex $\mathbf{W}$ which corresponds to the set of binary variables $\{W_1,\dots,W_k\}$, and three auxiliary vertices $S_0, X$, and $Y$.
    Vertex $\mathbf{W}$ is a parent of every $S_i$ for $1\leq i\leq m$.
    There is an edge from $S_{i-1}$ to $S_i$ for $1\leq i\leq m$.
    Also, $S_0$ is a parent of $X$, and $X$ and $S_m$ are parents of $Y$.
    As for the labels, the label on each edge $S_{i-1}\to S_i$ is that this edge is absent if the assignment of $\mathbf{W}$ does not satisfy the clause $S_i$.
    Constructing this LDAG is clearly polynomial-time, as it only requires time in the order of the number of variables $\{W_1,\dots,W_k\}$ and the number of clauses $\{S_1,\dots,S_m\}$.
    We claim that $P(y)$ is imitable in this LDAG if and only if the 3-SAT instance is unsatisfiable.

    \textbf{Only if part.}
    Suppose that the 3-SAT instance is satisfiable.
    There exists a realization $\mathbf{w}^*$ of the binary variables $\{W_1,\dots,W_k\}$ that satisfies the formula $F$.
    Define a model $M$ over the variables of the LDAG as follows.
    $\mathbf{W}$ has the value $\mathbf{w}^*$ with probability $0.6$ ($P^M(\mathbf{W}=\mathbf{w}^*)=0.6$), and the rest of the probability mass is uniformly distributed over other realizations of $\mathbf{W}$ (i.e., each with probability $\frac{0.4}{2^k-1}$).
    $S_0$ is a Bernoulli random variable with parameter $\frac{1}{2}$.
    $S_i$ for $1\leq i\leq m$ is equal to $S_{i-1}$ if the edge $S_{i-1}\to S_i$ is present (the clause $S_i$ is satisfied by the realization of $\mathbf{W}$), and $S_i=0$ otherwise.
    $X$ is equal to $S_0$, and $Y$ is defined as $Y=\neg X\oplus S_m$.
    Note that model $M$ is compatible with the LDAG defined above.
    We claim that under this model, for every policy $\pi\in\Pi$, $P^M(y\vert do(\pi))\neq P^M(y)$.
    That is, $P(y)$ is not imitable.
    
    Let $\mathcal{D}_\mathbf{W}^s$ and $\mathcal{D}_\mathbf{W}^u$ be the partitioning of the domain of $\mathbf{W}$ into two disjoint parts which satisfy and do not satisfy the formula $F$, respectively.
    Note that $\mathbf{w}^*\in\mathcal{D}_\mathbf{W}^s$.
    For any realization $\mathbf{w} \in \mathcal{D}_\mathbf{W}^s$, observed values of $Y$ in $M$ is always equal to 1 (because $Y= \neg S_0 \oplus S_0$), i.e., $P^M(Y=1|\mathbf{w})=1$.
    On the other hand, for any realization $\mathbf{w}\in\mathcal{D}_\mathbf{W}^u$, there exists a clause $S_i$ which is not satisfied, and therefore $S_m=0$, and $Y=\neg X$. Therefore, $P^M(Y=1|\mathbf{w})=P^M(X=0)=\frac{1}{2}$.
    From the total probability law,
    \begin{equation}\label{eq:proofhardness}
        \begin{split}
            P^M(Y=1)&= \sum_{\mathbf{w} \in \mathcal{D}_\mathbf{W}^s}P(Y=1|\mathbf{w})P(\mathbf{w})+ 
            \sum_{\mathbf{w} \in \mathcal{D}_\mathbf{W}^u}P(Y=1|\mathbf{w})P(\mathbf{w})\\
            &=\sum_{\mathbf{w} \in \mathcal{D}_\mathbf{W}^s}P(\mathbf{w})+ \frac{1}{2} \sum_{\mathbf{w} \in \mathcal{D}_\mathbf{W}^u}P(\mathbf{w})\\
            &=\frac{1}{2}(\sum_{\mathbf{w} \in \mathcal{D}_\mathbf{W}^s}P(\mathbf{w})+ \sum_{\mathbf{w} \in \mathcal{D}_\mathbf{W}^u}P(\mathbf{w})) + \frac{1}{2}\sum_{\mathbf{w} \in \mathcal{D}_\mathbf{W}^s}P(\mathbf{w})\\
            &= \frac{1}{2}+ \frac{1}{2}\sum_{\mathbf{w} \in \mathcal{D}_\mathbf{W}^s}P(\mathbf{w})
            \geq \frac{1}{2}+\frac{1}{2}P(\mathbf{w}^*) = 0.8,
        \end{split}
    \end{equation}
    where we dropped the superscript $M$ for better readability.
    Now consider an arbitrary policy $\pi\in\Pi$.
    For any realization $\mathbf{w} \in \mathcal{D}_\mathbf{W}^s$,
    $Y=\neg X\oplus S_0$, where $S_0$ is a Bernoulli variable with parameter $\frac{1}{2}$ independent of $X$.
    As a result, $P^M(Y=1\vert do(\pi),\mathbf{w})=P^M(S=X)=\frac{1}{2}$.
    On the other hand, for any realization $\mathbf{w}\in\mathcal{D}_\mathbf{W}^u$, there exists a clause $S_i$ which is not satisfied, and therefore $S_m=0$, and $Y=\neg X$. Therefore, $P^M(Y=1|do(\pi),\mathbf{w})=P^M(X=0\vert do(\pi))=\pi(X=0)$.
    Using the total probability law again, and noting that $P^M(\mathbf{w}\vert do(\pi))=P^M(\mathbf{w})$,
    \begin{equation*}
        \begin{split}
            P^M(Y=1\vert do(\pi))&= \sum_{\mathbf{w} \in \mathcal{D}_\mathbf{W}^s}P(Y=1|do(\pi),\mathbf{w})P(\mathbf{w}\vert do(\pi))\\&+ 
            \sum_{\mathbf{w} \in \mathcal{D}_\mathbf{W}^u}P(Y=1|do(\pi),\mathbf{w})P(\mathbf{w}\vert do(\pi)) \\&=
            \frac{1}{2}\sum_{\mathbf{w} \in \mathcal{D}_\mathbf{W}^s}P(\mathbf{w})+ 
            \pi(X=0)\sum_{\mathbf{w} \in \mathcal{D}_\mathbf{W}^u}P(\mathbf{w}),
        \end{split}
    \end{equation*}
    where we dropped the superscript $M$ for better readability.
    Now define $q = \sum_{\mathbf{w} \in \mathcal{D}_\mathbf{W}^s}P^M(\mathbf{w})=1-\sum_{\mathbf{w} \in \mathcal{D}_\mathbf{W}^u}P(\mathbf{w})$,
    where we know $q\geq 0.6$ since $\mathbf{w}^*\in\mathcal{D}_\mathbf{W}^s$.
    From the equation above we have
    \begin{equation}\label{eq:proofhardness2}
        P^M(Y=1\vert do(\pi))=\frac{1}{2}q+\pi(X=0)(1-q),
    \end{equation}
    where $0.6\leq q\leq 1$ and $0\leq\pi(X=0)\leq 1$.
    We claim that $P(Y=1\vert do(\pi))<0.7$.
    Consider two cases:
    if $\pi(X=0)\leq\frac{1}{2}$, then from Equation~\eqref{eq:proofhardness2}, 
    \[P(Y=1\vert do(\pi))\leq \frac{1}{2}q+\frac{1}{2}(1-q)=0.5<0.7.\]
    Otherwise, assume $\pi(X=0)>\frac{1}{2}$.
    Rewriting Equation~\eqref{eq:proofhardness2},
    \[
    \begin{split}
         P(Y=1\vert do(\pi))=q(\frac{1}{2}-\pi(X=0))+\pi(X=0)&<0.6(\frac{1}{2}-\pi(X=0))+\pi(X=0)\\
         &=0.3+0.4\pi(X=0)\leq0.7.
    \end{split}
   \]
    We showed that for any arbitrary policy $\pi\in\Pi$, 
    \begin{equation}
        P^M(Y=1\vert do(\pi))<0.7.
    \end{equation}
    Comparing this to Equaiton~\eqref{eq:proofhardness} completes the proof.

    \textbf{If part.}
    Let $I_X$ be the intervention vertex corresponding to $X$, i.e., $I_X=0$ and $I_X=1$ indicate that
    $X$ is passively observed (determined by its parents) and actively intervened upon (independent of its parents), respectively (refer to Figure \ref{fig:proofhardness}).
    Assume that the 3-SAT instance is unsatisfiable.
    We claim that $\pi(x)=P(x)$ is an imitating policy for $P(y)$.
    Since the 3-SAT is unsatisfiable, for any context $\mathbf{w} \in \mathcal{D}_\mathbf{W}$, there exists $1\leq i\leq m$ such that the edge $S_{i-1} \to S_i $ is absent.
    Then, we have $I_X \perp Y | X, \mathbf{W}= \mathbf{w}$ in $\G$ for every $\mathbf{w} \in \mathcal{D}_\mathbf{W}$, then $I_X \perp Y | X, \mathbf{W}$ in $\G$.
    Moreover, since the d-separation $I_X \perp_\G \mathbf{W}$ holds,
    by contraction, we have $ I_X \perp Y, \mathbf{W} | X$ in $\G$.
    As a result, $I_X \perp Y | X$ in $\G$.
    Therefore,
    \begin{equation*}
        \begin{aligned}
            P(y|do(\pi)) &= \sum_{x \in \mathcal{D}_X} P(y| do(x)) \pi(x)= \sum_{x \in \mathcal{D}_X} P(y| x, I_X=1) P(x)= \sum_{x \in \mathcal{D}_X} P(y| x, I_X=0) P(x)\\
            &= \sum_{x \in \mathcal{D}_X} P(y| x) P(x)= \sum_{x \in \mathcal{D}_X} P(y| x) P(x)= P(y).
        \end{aligned}
    \end{equation*}
\end{proof}

\propif*

\begin{proof}
    It suffices to show that for an arbitrary $M \in \M_{\langle \G^ \mathcal{L} \rangle}$, there exists a policy $\pi \in \Pi$ uniquely computable from $P(\mathbf{O})$ such that $P^M(y|do(\pi))= P^M(y)$.
    For ease of notation, let $\mathbf{C}\coloneqq \mathbf{C}(\mathcal{L})$.
    For every $\mathbf{c} \in \mathcal{D}_\mathbf{C}$, construct an SCM $M_\mathbf{c}$ over $\mathbf{V}$ by setting $\mathbf{C}=\mathbf{c}$, and replacing every occurrence of variables $\mathbf{C}$ by the constant value $\mathbf{c}$ in the equations of $M$.
    Therefore, $M_\mathbf{c}$ is compatible with $\G_\mathbf{c}$ and
    \begin{equation*}
        P^{M_\mathbf{c}}(\mathbf{O} )=P^M(\mathbf{O}| do(\mathbf{c})).
    \end{equation*}
    By assumption, we know that for every $\mathbf{c} \in \mathcal{D}_\C$, there exists a policy $\pi_\mathbf{c} \in \Pi$ uniquely computable from  $P(\mathbf{O})$ such that \begin{equation}\label{eq:proofinter}P^{M_\mathbf{c}}(y| do(\pi_\mathbf{c}))= P^{M_\mathbf{c}}(y).\end{equation}

    Suppose $\pi_c$ relies on the values of $\mathbf{Z}_\mathbf{c}\subseteq \mathbf{Pa}^\Pi$.
    We can write
    \begin{equation}\label{eq: Pypi}
        \begin{aligned}
            P^{M_\mathbf{c}}(y|do(\pi_\mathbf{c}))&= P^M(y|do(\pi_\mathbf{c}), do(\mathbf{c}))\\
            &= \sum_{x\in\mathcal{D}_{X}, \mathbf{z}_\mathbf{c} \in \mathcal{D}_{\mathbf{Z}_\mathbf{c}}} P^M(y|do(x), do(\mathbf{c}), \mathbf{z}_\mathbf{c} ) P^M(x| do(\pi_\mathbf{c}),do(\mathbf{c}),\mathbf{z}_\mathbf{c} ) P^M(\mathbf{z}_\mathbf{c}| do (\mathbf{c}) )\\
            &\labelrel={eq: ydsepc}  \sum_{x\in\mathcal{D}_{X}, \mathbf{z}_\mathbf{c} \in \mathcal{D}_{\mathbf{Z}_\mathbf{c}}} P^M(y \vert do(x),\mathbf{c}, \mathbf{z}_\mathbf{c}) P^M(x| do(\pi_\mathbf{c}), \mathbf{c}, \mathbf{z}_\mathbf{c}) P^M(\mathbf{z}_\mathbf{c}| \mathbf{c})\\
            &=P^M(y| do(\pi_\mathbf{c}), \mathbf{c}),
        \end{aligned} 
    \end{equation}
    where the first and third parts of \eqref{eq: ydsepc} hold since $Y \perp \mathbf{C} | X, \mathbf{Z}_\mathbf{c}$ in $\G_{\overline{X}\underline{\mathbf{C}}}$ and $\mathbf{Pa}(\mathbf{C}) \subseteq \mathbf{C}$.
    For the second part, let $M_{\pi_\mathbf{c}}$ be the model obtained from substituting the model of $X$ given its parents by $\pi_\mathbf{c}$ in the equations of $M$.
    Then, we get
    \begin{equation*}
        \begin{aligned}
            P^M(x\vert do(\pi_\mathbf{c}),do(\mathbf{c}), \mathbf{z}_ \mathbf{c})=P^{M_{\pi_\mathbf{c}}}(x|do(\mathbf{c}), \mathbf{z}_ \mathbf{c})= P^{M_{\pi_\mathbf{c}}}(x|\mathbf{c},  \mathbf{z}_ \mathbf{c})= P^M(x\vert do(\pi_\mathbf{c}),\mathbf{c},  \mathbf{z}_ \mathbf{c}).
        \end{aligned}
    \end{equation*}
    Note that, since $\mathbf{Pa}(\mathbf{C}) \subseteq \mathbf{C}$, $P^{M_{\pi_\mathbf{c}}}(x|do(\mathbf{c}), \mathbf{z}_ \mathbf{c})=P^{M_{\pi_\mathbf{c}}}(x|\mathbf{c}, \mathbf{z}_ \mathbf{c})$.
    
    Furthermore, 
    \begin{equation}\label{eq: Pyc}
        P^{M_\mathbf{c}}(y)= P^M(y| do(\mathbf{c}))=P^M(y|\mathbf{c}).
    \end{equation}
    Combining Equations \eqref{eq: Pypi}, \eqref{eq: Pyc} and \eqref{eq:proofinter} we get the following for every $\mathbf{c} \in \mathcal{D}_\mathbf{C}$
    \begin{equation}\label{eq: pc}
        P^M(y|do(\pi_\mathbf{c}), \mathbf{c})=P^M(y|\mathbf{c}).
    \end{equation}
    Next, define a policy $\pi^*(x\vert \mathbf{Pa}^\Pi) \coloneqq \sum_{\mathbf{c} \in \mathcal{D}_\mathbf{C}} \mathds{1}_{\{{(\mathbf{Pa})}^\Pi_ {\mathbf{C}} =\mathbf{c}\}} \pi_\mathbf{c}$,
    where $\mathbbm{1}_{\{{(\mathbf{Pa})}^\Pi_ {\mathbf{C}} =\mathbf{c}\}}$ is an indicator function such that $ \mathbbm{1}_{\mathbf{c}}=1$ when $\mathbf{C}=\mathbf{c}$, and is equal to zero otherwise.
    Now, we can write
    \begin{equation*}
        \begin{aligned}
            &P^M(y|do(\pi^*))\\
            &= \sum_{\mathbf{c} \in \mathcal{D}_\mathbf{C}} P^M(y|do(\pi^*), \mathbf{c}) P^M(\mathbf{c}|do(\pi^*))\\
            & = \sum_{\mathbf{c} \in \mathcal{D}_\mathbf{C}} P^M(y|do(\pi_\mathbf{c}), \mathbf{c}) P^M(\mathbf{c})\\
            & = \sum_{\mathbf{c} \in \mathcal{D}_\mathbf{C}} P^M(y| \mathbf{c}) P^M(\mathbf{c})=P^M(y),
        \end{aligned}
    \end{equation*}
    where the last line holds due to Equation \eqref{eq: pc} and the fact that $\mathbf{Pa}(\mathbf{C}) \subseteq \mathbf{C}$.
    We proved that there exists a policy $\pi^* \in \Pi$ uniquely computable from $P(\mathbf{O})$ such that $P^M(y| do(\pi^*))=P^M(y)$.
    Hence, $P(y)$ is imitable w.r.t. $\langle \G^\mathcal{L}, \Pi, \rangle$.
\end{proof}

\prpcontext*

\begin{proof}

Let $M \in \mathcal{M}_{\langle \G^\mathcal{L} \rangle}$ and $P^M(\mathbf{O})= P(\mathbf{O})$, i.e., $M$ induces the observational distribution $P(\mathbf{O})$, be an arbitrary model.
It suffices to show that there is a policy $\pi \in \Pi$ uniquely computable from $P(\mathbf{O})$ such that $P^M(y|do(\pi))=P^M(y)$.
To do so, first, partition $\mathcal{D}_\mathbf{C} $ into two subsets $\mathcal{D}_\mathbf{C} ^1$ and $\mathcal{D}_\mathbf{C} ^2$ such that $\mathcal{D}_\mathbf{C}= \mathcal{D}_\mathbf{C} ^1 \cup \mathcal{D}_\mathbf{C}^2 $ and for every $\mathbf{c} \in \mathcal{D}_\mathbf{C} ^1$ the first condition of the lemma holds, whereas for every $\mathbf{c} \in \mathcal{D}_\mathbf{C} ^2$ the second condition does.

For each $\mathbf{c} \in \mathcal{D}_\mathbf{C}^1$, construct model $M_{\pi_\mathbf{c}}$ by substituting the model of $X$ given its parents by $\pi_\mathbf{c}$ in $M$.
Now, we get the following for each $\mathbf{c} \in \mathbf{D}_\mathbf{C}^1$,

\begin{equation}\label{eq: yc}
    \begin{aligned}
        P^M(y|do(\pi_\mathbf{c}), \mathbf{c}) &= P^{M_{\pi_\mathbf{c}}}(y|\mathbf{c})= \sum_{\mathbf{s}_\mathbf{c} \in \mathcal{D}_{\mathbf{S}_\mathbf{c}} } P^{M_{\pi_\mathbf{c}}}(y|\mathbf{c}, \mathbf{s}_\mathbf{c}) P^{M_{\pi_\mathbf{c}}}(\mathbf{s}_\mathbf{c}| \mathbf{c})\\
        &\labelrel={eq: sdsepx} \sum_{\mathbf{s}_\mathbf{c} \in \mathcal{D}_{\mathbf{S}_\mathbf{c}} } P^{M_{\pi_\mathbf{c}}}(y|do(x), \mathbf{c}, \mathbf{s}_\mathbf{c}) P^{M_{\pi_\mathbf{c}}}(\mathbf{s}_\mathbf{c}| \mathbf{c})\\
         & =  \sum_{\mathbf{s}_\mathbf{c} \in \mathcal{D}_{\mathbf{S}_\mathbf{c}} } P^M(y|do(x), \mathbf{c}, \mathbf{s}_\mathbf{c}) P^{M_{\pi_\mathbf{c}}}(\mathbf{s}_\mathbf{c}| \mathbf{c})\\
         & \labelrel={eq: sdsepx2} \sum_{\mathbf{s}_\mathbf{c} \in \mathcal{D}_{\mathbf{S}_\mathbf{c}} } P^M(y| \mathbf{c}, \mathbf{s}_\mathbf{c}) P^{M_{\pi_\mathbf{c}}}(\mathbf{s}_\mathbf{c}| \mathbf{c})\\
         & \labelrel={eq: dd} \sum_{\mathbf{s}_\mathbf{c} \in \mathcal{D}_{\mathbf{S}_\mathbf{c}} } P^M(y| \mathbf{c}, \mathbf{s}_\mathbf{c}) P^M(\mathbf{s}_\mathbf{c}| \mathbf{c}) = P^M(y|\mathbf{c}).
    \end{aligned}
\end{equation}

where \eqref{eq: sdsepx} and \eqref{eq: sdsepx2} hold since $Y \perp X | \mathbf{S}_\mathbf{c}, \mathbf{C}$ in $\G^\mathcal{L}(\mathbf{c}) \cup \Pi$ and in $\G^\mathcal{L}(\mathbf{c})$, respectively (or $Y \independent X| \mathbf{S}_\mathbf{c}, \mathbf{c}$ in $M^{\pi_\mathbf{c}}$ and $M$).
Moreover, \eqref{eq: dd} holds because $P^M(\mathbf{S}_\mathbf{c}|do(\pi_\mathbf{c}), \mathbf{c})= P^M(\mathbf{S}_\mathbf{c}|\mathbf{c} )$.

Next, for each $\mathbf{c} \in \mathcal{D}_\mathbf{C}^2$, 
let $\pi_\mathbf{c}(x|\mathbf{Pa}^\Pi)= P^M(x| \mathbf{Z}_\mathbf{c}, \mathbf{c})$, then we get

\begin{equation*}
    \begin{aligned}
        P^M(y|do(\pi_\mathbf{c}), \mathbf{c}) &= \sum_{x \in \mathcal{D}_X, \mathbf{z}_\mathbf{c} \in \mathcal{D}_{\mathbf{Z}_\mathbf{c}}} P^M(y|do(x), \mathbf{z}_\mathbf{c}, \mathbf{c}) P^M(x| \mathbf{z}_\mathbf{c}, \mathbf{c}) P^M(\mathbf{z}_\mathbf{c}| \mathbf{c})\\
        &= \sum_{x \in \mathcal{D}_X, \mathbf{z}_\mathbf{c} \in \mathcal{D}_{\mathbf{Z}_\mathbf{c}}} P^M(y|x, \mathbf{z}_\mathbf{c}, \mathbf{c}) P^M(x| \mathbf{z}_\mathbf{c}, \mathbf{c}) P^M(\mathbf{z}_\mathbf{c}| \mathbf{c})= P^M(y|\mathbf{c}),\\
    \end{aligned}
\end{equation*}
where the second equation holds since $X \perp Y | \mathbf{Z}_\mathbf{c}, \mathbf{C}$ in $\G^\mathcal{L}(\mathbf{c})_{\underline{X}}$.

Now, define a policy $\pi^*(x\vert \mathbf{Pa}^\Pi) \coloneqq \sum_{\mathbf{c} \in \mathcal{D}_\mathbf{C}} \mathds{1}_{\{{(\mathbf{Pa})}^\Pi_ {\mathbf{C}} =\mathbf{c}\}} \pi_\mathbf{c}$.

We get the following where the second line follows from Equation \eqref{eq: yc} and the fact that $\mathbf{C} \cap \mathbf{De}(X) = \emptyset$.
    \begin{equation*}
        \begin{aligned}
            P^M(y|do(\pi^*)) &= \sum_{\mathbf{c} \in \mathcal{D}_\mathbf{C}}P^M(y|do(\pi^*), \mathbf{c})P^M(\mathbf{c}|do(\pi^*))\\
            & = \sum_{\mathbf{c} \in \mathcal{D}_\mathbf{C}}P^M(y|do(\pi_\mathbf{c}), \mathbf{c})P^M(\mathbf{c})\\
            &= \sum_{\mathbf{c} \in \mathcal{D}_\mathbf{C}}P^M(y|\mathbf{c})P^M(\mathbf{c})= P^M(y).
        \end{aligned}
    \end{equation*}
    
The above equation implies that there is a policy $\pi^* \in \Pi$ such that $P^M(y|do(\pi^*))= P^M(y)$ where $M$ is an arbitrary model compatible with $\G^\mathcal{L}$.
Therefore, $P(y)$ is imitable w.r.t. $\langle \G^\mathcal{L}, \Pi \rangle$ and $\pi^*$ is an imitating policy for that.
   
\end{proof}

\section{Algorithm for Finding $\pi$-backdoor admissible Set}\label{apx:alg}

 Algorithm \ref{alg: find pi} takes a DAG $\G$, variables $X$ and $Y$, and a subset of variables $\mathbf{C}$ as inputs.
 It finds a separating set containing $\mathbf{C}$, if exists, in $\G_{\underline{X}}$ between $X$ and $Y$.
  To do so, it constructs $\mathbf{Z}$ in line 4.
 According to lemma 3.4 of \cite{van2014constructing} where $I= \mathbf{C}$ and $R= \mathbf{Pa}^\Pi$, if such separating set exists, $\mathbf{Z}$ is a separating set.
  
\begin{algorithm}[tb]
\caption{Find Possible $\pi$-backdoor admissible set}
\label{alg: find pi}
\begin{algorithmic}[1] 
\Function { FindSep }{$\G, \Pi, X, Y, \mathbf{C}$}
    \If{$\mathbf{C}$ is not specified}
        \State $\mathbf{C}\gets\emptyset$
    \EndIf
    \State{Set $\mathbf{Z} = \mathbf{An}(\{X,Y\}\cup\mathbf{C}) \cap (\mathbf{Pa}^\Pi)$ }
    \If{{\bfseries TESTSEP} ($\G_{\underline{X}}, X, Y, \mathbf{Z}$)}
        \State\Return $\mathbf{Z}$
    \Else
        \State \Return FAIL
    \EndIf
\EndFunction
\end{algorithmic}
\end{algorithm}

\section{Experimental Setup}\label{apx:exp}
For the experiments of Section \ref{sec:exp2},
we worked with an SCM in which
\[\begin{split}
    &C \sim Be(0.05),\\
    &P(Y=0|C=1)= 0.2,\\
    &P(Y=1|C=1)= 0.5,\\
    &P(Y=2|C=1)= 0.3,\\
    &T \sim Be(0.4),\\
    &U_1\sim Be(0.8),\\ 
    &X|T=0, U_1=0 \sim Be(0.7),\\ 
    &X|T=0, U_1=1 \sim Be(0.7),\\
    &X|T=1, U_1=0 \sim Be(0), \text{ and,}\\
    &X|T=1, U_1=1 \sim Be(1).
\end{split}
\]
Moreover, we had
\[
\begin{split}
    &S| C=0, X=0, U_1=0 \sim Be(1),\\ 
    &S| C=0, X=0, U_1=1 \sim Be(0),\\
    &S| C=0, X=1, U_1=0 \sim Be(0),\\
    &S| C=0, X=1, U_1=1 \sim Be(1).
\end{split}
\]
 
And finally, 
\[
\begin{split}
    &P(Y=0| C=0, S=0)= 0.8, \\
    &P(Y=1| C=0, S=0)= 0.1, \\
    &P(Y=2| C=0, S=0)= 0.1, \\
    &P(Y=0| C=0, S=1)= 0.05,\\
    &P(Y=1| C=0, S=1)= 0.2, \text{ and},\\
    &P(Y=2| C=0, S=1)= 0.75.
\end{split}
\]


\end{document}